\documentclass[11pt]{article}

\PassOptionsToPackage{hyphens}{url}

\usepackage[final]{acl}

\usepackage{times}
\usepackage{latexsym}
\usepackage[T1]{fontenc}
\usepackage[utf8]{inputenc}
\usepackage{microtype}
\usepackage{inconsolata}

\usepackage{xurl}
\usepackage{graphicx}
\usepackage{booktabs}
\usepackage{multirow}
\usepackage{float}

\usepackage{amsfonts}
\usepackage{amsmath}
\usepackage{amssymb}
\usepackage{amsthm}
\usepackage{nicefrac}

\usepackage{algorithm}
\usepackage{algpseudocode}
\usepackage{enumitem}

\newtheorem{proposition}{Proposition}

\title{Rollout-Level Advantage-Prioritized Experience Replay for GRPO}

\author{
  Gyeongtae Yoo$^{1}$ \quad Sanghyeok Park$^{2}$ \quad Soohyuk Jang$^{1}$ \quad Ik-hwan Kim$^{1}$ \\
  Sungroh Yoon$^{1,2,3}$\stepcounter{footnote}\thanks{Corresponding author.} \\
  $^{1}$Department of Electrical and Computer Engineering, Seoul National University \\
  $^{2}$Interdisciplinary Program in AI, Seoul National University \\
  $^{3}$AIIS, ASRI, INMC, and ISRC, Seoul National University \\
  \texttt{sryoon@snu.ac.kr}
}

\begin{document}
\maketitle

\begin{abstract}
Reinforcement learning from verifiable rewards with GRPO is a standard approach for post-training reasoning LLMs.
It remains sample inefficient.
Each rollout is used for a single gradient update and then discarded.
Naive replay is not well suited in this setting because LLM policies drift quickly per gradient step.
Stored rollouts therefore become stale and can destabilize training.
We propose a rollout-level replay buffer for GRPO that stores and samples individual rollouts rather than whole groups.
The buffer bounds staleness through age eviction.
Any rollout older than $\tau_{\max}$ training steps is removed.
The buffer also preserves on-policy data via fresh-anchored composition.
Each batch keeps its fresh on-policy rollouts and then concatenates replay rollouts drawn separately from the buffer.
We prioritize replay by per-rollout advantage magnitude and recycle individual rollouts whose advantages are large.
Across three Qwen3-Base scales on five math benchmarks, our method outperforms GRPO and naive replay baselines.
Gains are positive at every scale and reach $+1.66$~pp on the five-benchmark average at 4B.
Under an AES metric that jointly measures accuracy and token efficiency, our method is the only condition with a positive margin over GRPO at every scale.
\end{abstract}

\section{Introduction}
\label{sec:intro}

Reinforcement learning from verifiable rewards (RLVR) with Group Relative Policy Optimization (GRPO)~\citep{deepseekmath, deepseekr1} is a standard approach for post-training reasoning LLMs.
Producing a group of candidate rollouts per training prompt accounts for most of GRPO's training cost~\citep{prompt-replay, greso}.
Yet each rollout drives only a single gradient update before it is discarded.
This is especially wasteful for rollouts that carry a strong learning signal.
One example is a rare correct rollout in an otherwise-incorrect group.

Experience replay is one option, but it does not transfer directly to GRPO.
Policies drift quickly per gradient step, and stored rollouts become stale within a few updates~\citep{areal, m2po}.
Such staleness can destabilize training, so explicit staleness control is required in any GRPO replay design.

We propose a \textbf{rollout-level} replay buffer for GRPO, where individual rollouts are stored and sampled.
The buffer bounds staleness via \textbf{age eviction}.
Each rollout's age is the number of training steps elapsed since it was generated.
Any rollout older than $\tau_{\max}$ steps is removed.
The policy lag of any reused rollout is therefore bounded by $\tau_{\max}$ steps.
This step-level bound holds independently of buffer size or model scale, unlike a capacity-based rule.

The buffer-entry and eviction policy alone do not specify how the gradient batch is constructed.
A single unified buffer of fresh and stale rollouts can displace the fresh on-policy rollouts at a given step.
This is the standard DQN and PER pattern, and we refer to it as \textbf{pool composition}.
We instead adopt the \textbf{fresh-anchored composition} used in prior GRPO replay work~\citep{exgrpo, repo}.
The fresh rollouts are retained in full.
Replay rollouts are drawn separately and then concatenated on top.
Each gradient step then includes an on-policy anchor.
Age eviction and fresh-anchored composition together bound worst-case policy lag.

Within each group, the standard normalization yields a per-rollout group-relative advantage $A_i$.
Rollouts with the largest absolute advantage $|A_i|$ account for most of the learning signal~\citep{cppo, dppo-prune}.
These are the within-group minority rollouts in skewed groups.
We adopt $|A_i|$ as the replay priority signal and sample individual rollouts directly across the buffer.
This contrasts with prior GRPO replay schemes that operate at the granularity of the query.
Those schemes store and sample the $G$ rollouts of each prompt together as a single bound unit, a setting we refer to as \textbf{query-level} replay~\citep{exgrpo, repo}.

We train on the DeepScaleR-Preview prompt set~\citep{deepscaler} at three Qwen3-Base scales of 0.6B, 1.7B, and 4B.
Our method outperforms GRPO and two naive replay baselines on five math benchmarks.
Gains are positive at every scale and reach $+1.66$~pp on the five-benchmark average at 4B.
The contributions of our work are as follows.
\begin{itemize}[nosep,leftmargin=*]
\item We propose a GRPO replay buffer that combines age eviction, fresh-anchored composition, and rollout-level $|A_i|$ priority.
      Our method outperforms GRPO and naive replay baselines at every scale.
      Five-benchmark-average gains are $+0.61$~pp at 0.6B, $+0.60$~pp at 1.7B, and $+1.66$~pp at 4B.
      See Section~\ref{sec:method}, Section~\ref{sec:exp-main}, and Table~\ref{tab:cross-scale-main}.
  \item Per-axis ablations at 1.7B justify each component of the method.
        A moderate age cap of $\tau_{\max}=10$ performs best and trades off replay volume against staleness.
        Fresh-anchored composition is necessary at scale.
        Its pool-composition alternative mixes stale rollouts with fresh ones and falls below GRPO at 4B.
        Rollout-level $|A_i|$ priority further leads on PASS@$K$.
        See Section~\ref{sec:exp-design} and Tables~\ref{tab:age-clipping}, \ref{tab:pool-vs-concat-cross-scale}, and \ref{tab:priority-signal}.
  \item Our method also improves the accuracy efficiency score (AES) over GRPO at every scale.
        AES jointly scores accuracy and generation-token budget as a single efficiency value.
        A length-by-correctness decomposition further analyzes the gain.
        See Section~\ref{sec:dynamics} and Tables~\ref{tab:aes-base} and \ref{tab:length-quality}.
\end{itemize}

\begin{figure*}[!t]
    \centering
    \includegraphics[width=\textwidth]{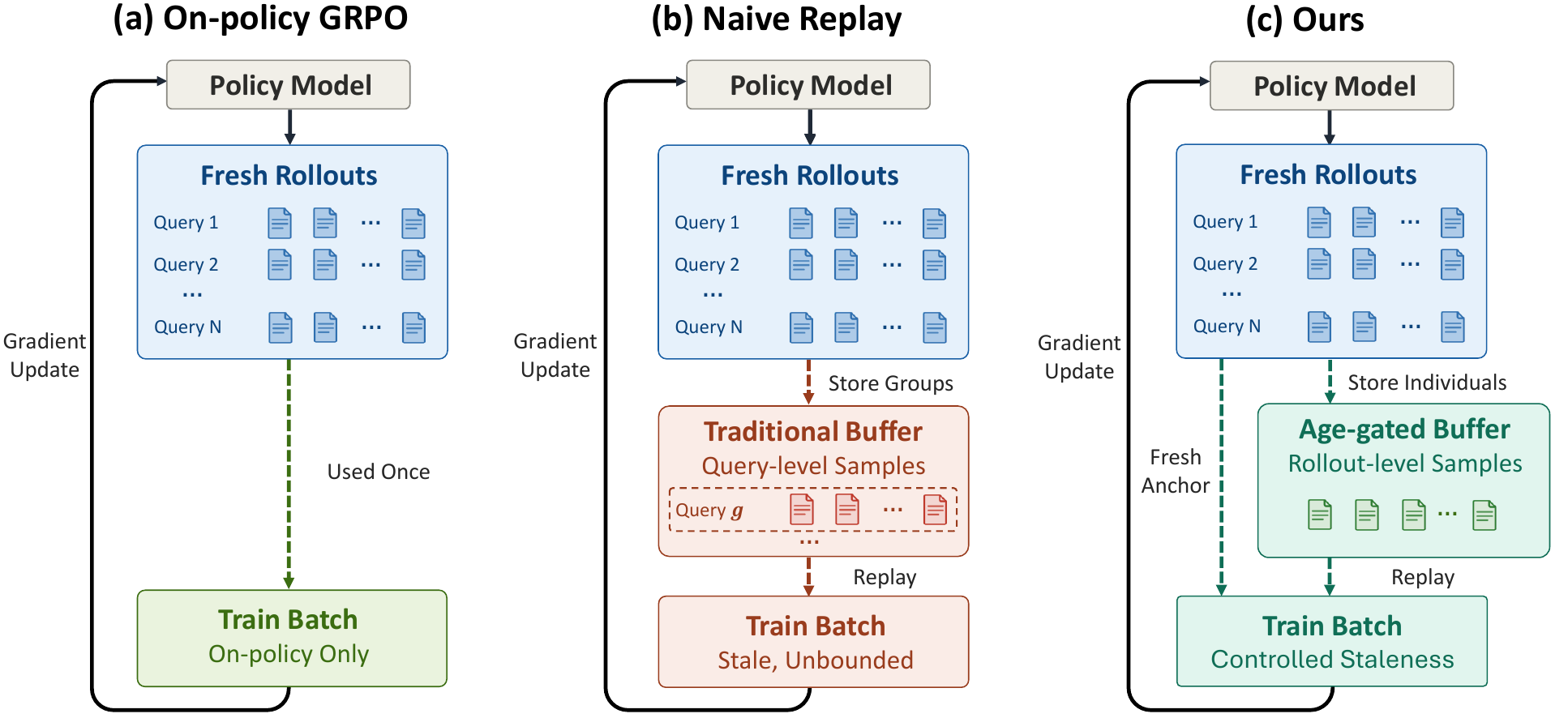}
    \caption{RL post-training paradigms for LLMs.
             Panel (a) is strictly on-policy GRPO that discards each rollout.
             Panel (b) is classical replay that stores whole groups under a capacity bound with unbounded staleness.
             Panel (c) is our method, which anchors every update with a fresh batch and replays individual rollouts from an age-gated buffer under $|A_i|$ priority.}
    \label{fig:method-overview}
\end{figure*}

\section{Related Work}
\label{sec:related}

\paragraph{Experience Replay with DQN and PER}
Experience replay originated with DQN~\citep{dqn}.
DQN trains a value network on transitions sampled from a FIFO buffer so that the network draws past experience rather than only the latest trajectory.
Prioritized Experience Replay~\citep{per} extends DQN with sampling proportional to a power-scaled priority.
The priority is typically the TD-error and focuses updates on transitions with the largest learning signal.
Both methods are built for value-based RL with millions of small refreshable transitions.
That regime does not hold for GRPO.
Our buffer adopts the DQN-style FIFO and the PER backbone, and adds GRPO-specific adaptations developed in Section~\ref{sec:method}.

\paragraph{Experience Replay for LLM RL}
Applying replay to GRPO violates two PER assumptions.
First, the group-relative advantage $A_i$ cannot be refreshed without regenerating the entire group.
Second, LLM policies drift quickly per gradient step~\citep{areal, m2po}, so even recent rollouts can be substantially stale.
The closest architectural neighbors are ExGRPO~\citep{exgrpo}, RePO~\citep{repo}, RLEP~\citep{rlep}, Buffer Matters~\citep{buffer-matters}, and \citet{fatemi2026prioritized}.
All five mix on-policy fresh rollouts with replayed rollouts in each GRPO update.
This on-policy anchor design is the same one that our fresh-anchored composition in Section~\ref{sec:method-buffer} adopts.
The five works differ from our method along the buffer-entry and priority axes, since all five replay at the query level.
ExGRPO prioritizes by correctness times entropy, RLEP by verified-correct retention, and Buffer Matters by re-evaluating historically difficult samples.
\citet{fatemi2026prioritized} prioritizes by empirical solve rate.
This is the maximum-variance regime equivalent to high $\sigma_g$.

Two other works apply a DQN-style replay framing for LLM RL.
\citet{arnal2026efficientrlreplay} maintains a FIFO trajectory buffer with uniform or correctness-weighted sampling.
That work uses an importance-ratio correction in place of an on-policy anchor.
FreshPER~\citep{freshper} applies a soft exponential age decay under reward-magnitude priority.
Both differ from our method along the priority and staleness-control axes.
\citet{arnal2026efficientrlreplay} also differs on composition.
Beyond these closest neighbors, the GRPO replay literature includes prompt-level reuse such as Prompt Replay~\citep{prompt-replay} and PSPO~\citep{pspo}.
It also includes filter-replay frameworks such as EFRame~\citep{efframe} and DOTS~\citep{dots}, and diversity-preserving replay such as DyJR~\citep{dyjr}.
These operate at the prompt level or interleave selection with replay, and are orthogonal to our rollout-level design.

\paragraph{Priority Signal Design}
Two angles in recent GRPO work inform priority signal design.
The first is absolute advantage as a selection signal.
CPPO~\citep{cppo} prunes completions with low $|A|$ since the high $|A|$ tail accounts for the bulk of the useful learning signal.
DPPO~\citep{dppo-prune} adds an importance-sampling correction under the same principle.
PODS~\citep{pods} downsamples each GRPO step's rollouts by reward variance.
It keeps the highest and lowest reward extremes within a group as a within-step analogue of high $|A_i|$ selection.
The second angle is within-group reward variance, widely used as a difficulty or informativeness proxy.
DAPO~\citep{dapo} drops zero-variance groups from training, a convention we also adopt in Section~\ref{sec:prelim}.
VCRL~\citep{vcrl} schedules training by group reward variance.
Online difficulty filtering~\citep{bae2026online, pcl} biases sampling toward prompts whose solve rate is bounded away from $0$ and $1$.
This is the maximum-variance regime under binary rewards and is equivalent to high $\sigma_g$.
The same regime is adopted as a replay-side priority by~\citet{fatemi2026prioritized}.
We use the query-level $\sigma_g$ as the baseline against our rollout-level $|A_i|$ priority.
Our work applies the first angle to replay sampling, since rollout-level $|A_i|$ priority targets the within-group minority rollouts that query-level $\sigma_g$ cannot resolve.
Appendix~\ref{app:extended-related-work} expands on the classical-replay and priority-signal threads and summarizes the position of our method in the broader design space.

\section{Preliminaries}
\label{sec:prelim}

\begin{algorithm*}[!t]
\caption{Rollout-level $|A_i|$ replay training step at $t$.
         Fresh rollouts always anchor the update, and the replay draw recycles past rollouts under an $|A_i|$-prioritized buffer with age eviction.}
\label{alg:ours}
\begin{algorithmic}[1]
\Require Current policy $\pi_t$,\; fresh-rollout count $B_{\text{fresh}}$,\; replay ratio $r$,\; max age $\tau_{\max}$

\Statex \textbf{Phase 1.} Fresh rollout generation
\State Sample $B_{\text{fresh}}$ fresh rollouts from $\pi_t$ and compute $\{A_i\}$ following Section~\ref{sec:prelim}
\State \textbf{Filter.} drop zero-variance groups, leaving $B_{\text{fresh}}'$ surviving rollouts \hfill\Comment{zero-variance filter}

\Statex \textbf{Phase 2.} $|A_i|$-prioritized replay sampling
\State Draw $r\,B_{\text{fresh}}'$ rollouts from the buffer via Equation~\ref{eq:per-sampling} \hfill\Comment{rollout-level}

\Statex \textbf{Phase 3.} Mixed-policy gradient update
\State $\pi_{t+1} \gets$ GRPO update on the fresh survivors concatenated with the replay draw \hfill\Comment{fresh-anchored}

\Statex \textbf{Phase 4.} Buffer update with age-gated eviction
\State Add the fresh survivors to the buffer with birth step $t$
\State Evict every rollout with birth step $\,<\,t-\tau_{\max}$ \hfill\Comment{age eviction}
\end{algorithmic}
\end{algorithm*}

We study GRPO~\citep{deepseekmath} under binary rewards.

\paragraph{GRPO}
At training step $t$ we pick a prompt $q$ and ask the current policy $\pi_t$ to generate a group of $G$ candidate answers $\{o_1, \ldots, o_G\}$.
Each $o_i$ is a rollout.
An automatic verifier assigns each rollout a binary reward $r_i \in \{-1, +1\}$.
The group-relative advantage of rollout $i$ is
\begin{equation}
\label{eq:grpo-adv}
  A_i \;=\; \frac{r_i - \mu_g}{\sigma_g + \epsilon_\sigma},
\end{equation}
where $\mu_g$ and $\sigma_g$ are the within-group reward mean and standard deviation, and $\epsilon_\sigma$ is a small numerical-stability constant.
Groups with identical rewards yield $A_i = 0$ for all members and provide no learning signal.
We therefore drop such groups, so the replay buffer holds only nonzero-advantage rollouts.
This keeps buffer management simple.
GRPO updates $\pi_\theta$ via a PPO-style clipped surrogate with importance ratio $\rho_i = \frac{\pi_\theta(o_i\mid q)}{\pi_t(o_i\mid q)}$,
\begin{multline}
\label{eq:grpo-obj}
  \mathcal{J}_{\text{GRPO}}(\theta)
  \;=\;
  \mathbb{E}_{q, \{o_i\}}
  \Bigg[
    \tfrac{1}{G}
    \sum_{i=1}^{G}
    \min\!\big(
      \rho_i A_i, \\
      \mathrm{clip}(\rho_i,\,
                    1 - \epsilon_{\text{low}},\,
                    1 + \epsilon_{\text{high}})\, A_i
    \big)
  \Bigg].
\end{multline}
Here $\pi_t$ denotes the behavior policy at the rollout's generation step.
$\pi_\theta$ denotes the current target policy at the update step.
$\rho_i$ is applied per token following the verl and DAPO convention.
For replay, $\pi_t(o_i\!\mid\! q)$ is cached with each rollout at generation time, so $\rho_i$ reflects drift from the rollout's birth step to the current training step.
We adopt the Clip-Higher scheme of DAPO~\citep{dapo} and set $\epsilon_{\text{low}} = 0.2$ and $\epsilon_{\text{high}} = 0.28$.

\paragraph{Prioritized Replay Sampling}
We use the PER scheme~\citep{per} with a fixed-capacity FIFO buffer that evicts the oldest entry when full.
Section~\ref{sec:method-buffer} extends this with age eviction.
A unit $i$ with priority $p_i \geq 0$ is drawn with probability
\begin{equation}
\label{eq:per-sampling}
  P(i) \;=\; \frac{p_i^{\alpha}}{\sum_j p_j^{\alpha}},
\end{equation}
where the sum runs over all units currently in the buffer.
The exponent $\alpha \in [0,1]$ interpolates between uniform sampling at $\alpha = 0$ and fully proportional sampling at $\alpha = 1$.
We set the priority to the advantage magnitude $p_i = |A_i|$.
Section~\ref{sec:method-priority} motivates this choice.

\section{Adapting Experience Replay to GRPO}
\label{sec:method}

\begin{table*}[t]
\centering\scriptsize
\setlength{\tabcolsep}{3pt}
\renewcommand{\arraystretch}{0.9}
\resizebox{\ifdim\width>\textwidth \textwidth\else \width\fi}{!}{%
\begin{tabular}{l l cc cc cc cc cc cc}
\toprule
\multirow{2}{*}[-0.5ex]{Model} & \multirow{2}{*}[-0.5ex]{Method}
 & \multicolumn{2}{c}{MATH-500}
 & \multicolumn{2}{c}{AIME25}
 & \multicolumn{2}{c}{AIME26}
 & \multicolumn{2}{c}{HMMT-F25}
 & \multicolumn{2}{c}{HMMT-F26}
 & \multicolumn{2}{c}{avg} \\
\cmidrule(lr){3-4}\cmidrule(lr){5-6}\cmidrule(lr){7-8}\cmidrule(lr){9-10}\cmidrule(lr){11-12}\cmidrule(lr){13-14}
 & & avg@8 & pass@8
 & avg@16 & pass@16
 & avg@16 & pass@16
 & avg@16 & pass@16
 & avg@16 & pass@16
 & avg & pass \\
\midrule
\multirow{4}{*}{Qwen3-0.6B}
  & GRPO & 52.71 & 74.27 & 1.67 & 14.44 & 0.28 & 3.33 & 0.35 & 4.44 & 0.88 & \textbf{10.10} & 11.18 $\pm$ 0.19 & 21.32 $\pm$ 0.40 \\
  & uniform & 52.89 & \textbf{76.87} & 1.94 & 12.22 & 0.90 & \textbf{7.78} & 0.21 & 3.33 & 0.57 & 6.06 & 11.30 $\pm$ 0.17 & 21.25 $\pm$ 1.24 \\
  & $\sigma_g$ & 52.32 & \textbf{76.87} & \textbf{2.50} & 14.44 & 0.90 & 6.67 & 0.28 & 4.44 & 1.14 & 9.09 & 11.43 $\pm$ 0.38 & 22.30 $\pm$ 0.92 \\
  & Ours & \textbf{53.53} & \textbf{76.87} & 2.08 & \textbf{15.56} & \textbf{1.18} & 5.56 & \textbf{0.62} & \textbf{5.56} & \textbf{1.52} & 9.09 & \textbf{11.79 $\pm$ 0.25} & \textbf{22.52 $\pm$ 0.78} \\
\midrule
\multirow{4}{*}{Qwen3-1.7B}
  & GRPO & 67.58 & 84.93 & 5.56 & 26.67 & 2.85 & 15.56 & 0.21 & 3.33 & \textbf{3.09} & 9.09 & 15.86 $\pm$ 0.10 & 27.92 $\pm$ 0.61 \\
  & uniform & 67.50 & 84.93 & 5.01 & 24.84 & 3.86 & 13.76 & 0.38 & 4.32 & 2.11 & 9.09 & 15.77 $\pm$ 0.41 & 27.39 $\pm$ 0.63 \\
  & $\sigma_g$ & 67.43 & 85.40 & 4.03 & 16.67 & 4.24 & \textbf{20.00} & 0.21 & 3.33 & \textbf{3.09} & \textbf{11.11} & 15.80 $\pm$ 0.40 & 27.30 $\pm$ 2.09 \\
  & Ours & \textbf{68.87} & \textbf{86.07} & \textbf{5.83} & \textbf{30.00} & \textbf{4.44} & 17.78 & \textbf{0.69} & \textbf{6.67} & 2.46 & \textbf{11.11} & \textbf{16.46 $\pm$ 0.18} & \textbf{30.32 $\pm$ 1.98} \\
\midrule
\multirow{4}{*}{Qwen3-4B}
  & GRPO & 83.31 & 92.80 & 17.57 & 38.89 & 17.29 & 32.22 & 8.89 & 20.00 & 11.81 & 26.26 & 27.77 $\pm$ 0.98 & 42.03 $\pm$ 1.28 \\
  & uniform & 82.86 & 92.93 & 17.19 & 38.19 & 15.13 & 32.95 & 8.58 & 27.38 & 10.93 & 24.96 & 26.94 $\pm$ 1.70 & 43.28 $\pm$ 0.68 \\
  & $\sigma_g$ & 82.25 & 92.67 & 16.60 & 42.22 & 14.65 & 33.33 & 6.39 & 20.00 & 9.85 & 24.24 & 25.95 $\pm$ 0.86 & 42.49 $\pm$ 3.12 \\
  & Ours & \textbf{84.52} & \textbf{94.13} & \textbf{19.38} & \textbf{43.33} & \textbf{19.44} & \textbf{45.56} & \textbf{10.14} & \textbf{31.11} & \textbf{13.70} & \textbf{30.30} & \textbf{29.43 $\pm$ 2.25} & \textbf{48.89 $\pm$ 3.71} \\
\bottomrule
\end{tabular}%
}
\caption{Cross-scale main results comparing GRPO, two naive replay baselines, and our fresh-anchored rollout-level $|A_i|$ method.
         The two naive baselines are uniform and $\sigma_g$ under pool composition.
         All replay variants share $r=0.5$, $\tau_{\max}=10$, and $\alpha=0.5$.
         MATH-500 uses $k=8$, and AIME and HMMT use $k=16$.
         The avg columns are the unweighted five-benchmark mean, reported as mean $\pm$ standard deviation across seeds.
         \textbf{Bold} marks the best per column within each scale.}
\label{tab:cross-scale-main}
\end{table*}

Applying experience replay to GRPO raises two issues.
First, replayed rollouts are stale relative to the current policy.
Second, the group structure of GRPO aggregates per-rollout learning signals into a single query-level value.
We address the first issue with age eviction and fresh-anchored composition, which together bound policy lag.
Section~\ref{sec:method-buffer} describes this design.
We address the second issue with rollout-level buffer entries and advantage-magnitude priority.
This selects the high-signal rollouts that query-level replay does not isolate, as described in Section~\ref{sec:method-priority}.
Figure~\ref{fig:method-overview} contrasts our method with standard on-policy GRPO and classical replay.
The full procedure is given in Algorithm~\ref{alg:ours}.

Replay makes the update only approximately on-policy, and the replayed portion of each gradient step is a biased estimator of the on-policy GRPO gradient.
We do not correct that bias.
The controls below bound that departure rather than remove it, and the cached $\pi_t(o_i \mid q)$ of a replayed rollout enters only through the clipped ratio, which limits its influence without reweighting it to the current policy.
Our update therefore does not recover the on-policy objective.

\subsection{Age Eviction and Fresh-Anchored Composition}
\label{sec:method-buffer}

\paragraph{Age Eviction}
To bound per-rollout staleness, each stored rollout carries a birth step $t_b$ that records the training step at which it was generated.
Its age at the current step $t$ is $t - t_b$.
We impose age eviction and remove any rollout whose age exceeds $\tau_{\max}$, so no update reuses a rollout beyond a fixed age.
Bounding by age rather than buffer capacity keeps the limit invariant to model scale and data difficulty.
Zero-variance filtering removes a scale-dependent fraction of generated groups, so the effective batch $B_{\text{fresh}}'$ varies with model scale.
Appendix~\ref{app:effective-batch} reports the per-scale values.
Any fixed capacity therefore induces different steady-state ages per setting.
In contrast, $\tau_{\max}$ matches the per-step drift of LLM post-training~\citep{areal, m2po} without per-configuration tuning.
A FIFO capacity can only approximate this because $B_{\text{fresh}}'$ shifts both across settings and during a single run as the solve-rate distribution of the policy changes.\footnote{%
A capacity cap of $30{,}000$ is retained as a backstop.
At the default $\tau_{\max} = 10$, age eviction is the binding constraint at all scales.
The steady-state buffer size $\tau_{\max} \cdot B_{\text{fresh}}'$ is about $10{,}000$ even at 4B.%
}
A brief warmup with replay disabled at the start of training allows the buffer to populate and the model to acquire basic instruction-following before replay activates.
The warmup length is given in Section~\ref{sec:exp-setup}.

\paragraph{Fresh-Anchored Composition}
Age eviction caps the staleness of each individual rollout, but it does not specify how fresh and replay rollouts combine into the gradient batch.
At each training step we have fresh on-policy rollouts sampled from the current policy $\pi_t$ and the age-bounded buffer of past rollouts.
We adopt fresh-anchored composition, following recent GRPO replay schemes that pair fresh on-policy rollouts with separately drawn replay rollouts.
The contrasting mode merges fresh and stale rollouts into a single pool, which is the standard DQN and PER pattern.
We call this pool composition and include it as a naive baseline in Section~\ref{sec:exp-main}, then ablate it against fresh-anchored composition in Section~\ref{sec:exp-staleness}.
Our formulation parameterizes the mix by an explicit replay ratio $r$.
The fresh rollouts are retained in full.
A replay portion of size $r \cdot B_{\text{fresh}}'$ is drawn separately from the buffer and concatenated on top, where $B_{\text{fresh}}'$ is the post-filter fresh rollout count.
Each gradient step then retains a fresh on-policy anchor that bounds policy lag independently of the replay portion.
The pool-composition baseline uses the same total batch size $(1+r) \cdot B_{\text{fresh}}'$ but draws the entire batch from a unified pool of fresh and stale rollouts.
Any difference between the two modes in Section~\ref{sec:exp-staleness} is therefore attributable to composition mode rather than to replay budget.

\subsection{Rollout-Level $|A_i|$ Priority}
\label{sec:method-priority}

\paragraph{$|A_i|$ Priority}
Even with age eviction bounding policy lag (Section~\ref{sec:method-buffer}), each replayed rollout is by construction at least slightly off-policy.
The priority signal must therefore focus the replay budget on rollouts whose learning signal still justifies the off-policy cost.
We expect a useful priority to favor rollouts with (i) a large birth-step gradient coefficient and (ii) low representation in fresh batches.
The per-rollout advantage magnitude $|A_i|$ satisfies both.
It is the per-rollout coefficient in the GRPO objective (Equation~\ref{eq:grpo-obj}), the same role TD-error plays in value-based PER.
High-$|A_i|$ rollouts are also the within-group minority of skewed groups (Appendix~\ref{app:binary-advantage-sigma}) that fresh batches see sparsely, because such groups arise from prompts the policy rarely solves cleanly.
In contrast, $\sigma_g$ would emphasize balanced groups that fresh batches already cover densely.
Recent GRPO work uses $|A|$ similarly~\citep{cppo, dppo-prune, pods}.
Because sampling from the buffer follows $P(i) \propto p_i^{\alpha}$ (Equation~\ref{eq:per-sampling}), replay draws high-$|A_i|$ rollouts more often than the on-policy distribution would, and that reweighting is the purpose of the buffer rather than a side effect.
We consequently do not undo it with a PER importance-sampling weight, which at full strength targets the same expected update as uniform sampling from the buffer, the $\alpha = 0$ setting that Table~\ref{tab:priority-signal} reports as rollout-level uniform and places below $|A_i|$ priority on both metrics.
We set the strength of the reweighting with $\alpha$ instead and temper it at $\alpha = 0.5$, because full proportional sampling at $\alpha = 1.0$ overshoots and costs pass accuracy (Appendix~\ref{app:ratio-alpha-full}).

\paragraph{Rollout-Level Buffer Entries}
$|A_i|$ priority requires a buffer-entry decision.
Conventional GRPO replay schemes~\citep{exgrpo, freshper} bind the $G$ rollouts of a query into one buffer entry, so all $G$ rollouts share a single priority.
This forces the priority signal to be query-level, such as uniform or a group-aggregate like $\sigma_g$.
A query-level signal cannot resolve the within-group minority that holds the $|A_i|$ signal.
Combined with pool composition (Section~\ref{sec:method-buffer}), this configuration is the naive baseline in Table~\ref{tab:cross-scale-main}.
We therefore deviate from this convention and store individual rollouts.
Each rollout carries its own birth-step $|A_i|$ and is sampled directly by Equation~\ref{eq:per-sampling}.
Section~\ref{sec:exp-priority-signal} confirms that the gain comes from $|A_i|$'s within-group variation, not from the buffer-entry shift.
That coefficient stays at its birth-step value together with the group statistics $(\mu_g, \sigma_g)$, so it is estimated under an earlier solve-rate distribution while the gradient it multiplies is taken under the current policy, and rollout-level draws break the within-group centering that keeps the fresh portion mean-zero.
The same $\tau_{\max}$ that bounds policy lag also bounds how stale the coefficient can be, since a replayed advantage is never more than $\tau_{\max}$ steps out of date.

\section{Experiments}
\label{sec:exp}

\subsection{Experimental Setup}
\label{sec:exp-setup}

We train three scales of the Qwen3-Base family at 0.6B, 1.7B, and 4B.
Training uses the DeepScaleR-Preview~\citep{deepscaler} prompt set across all conditions.
The four conditions are GRPO, the two naive replay baselines, and our method.
We hold all training hyperparameters constant except for the replay-buffer design.
The cross-scale main comparison of Section~\ref{sec:exp-main} and the analyses of Section~\ref{sec:dynamics} are run with three training seeds, and the ablations of Section~\ref{sec:exp-design} use a single seed.
Training hyperparameters and the seed policy are listed in Appendix~\ref{app:training-setup}.

\paragraph{Replay Buffer}
Our method instantiates Algorithm~\ref{alg:ours} with fresh-anchored composition from Section~\ref{sec:method-buffer} and rollout-level $|A_i|$ priority from Section~\ref{sec:method-priority}.
The priority is $p_i = |A_i| + \epsilon$, and we sample per Equation~\ref{eq:per-sampling} with $\alpha = 0.5$.
Age eviction uses $\tau_{\max} = 10$ and the replay ratio is $r = 0.5$.
We use a $20$-step warmup during which replay is disabled.
The naive replay baselines are uniform and $\sigma_g$.
They share the same $\tau_{\max}$, $r$, $\alpha$, and warmup as our method.
They differ from our method in three ways.
They use pool composition, they store at the query level, and their priority signal is uniform or $\sigma_g$ rather than $|A_i|$.
A replayed rollout's advantage $A_i$ and its group statistics $(\mu_g, \sigma_g)$ are frozen at the birth-step values, since refreshing them would require regenerating the entire group.
We ablate $\tau_{\max}$ in Section~\ref{sec:exp-staleness} and jointly ablate $r$ and $\alpha$ in Appendix~\ref{app:ratio-alpha-full}.

\paragraph{Benchmarks and Decoding}
We evaluate on five math-reasoning benchmarks, namely MATH-500, AIME25, AIME26, HMMT-F25, and HMMT-F26.
Appendix~\ref{app:training-setup} (Benchmarks and Decoding paragraph) gives the per-benchmark sizes, decoding settings, and per-benchmark $k$ values.
That appendix also describes the five-benchmark aggregate used in the ablation tables from Table~\ref{tab:age-clipping} through Table~\ref{tab:priority-signal} and the full seed policy.

\begin{table}[t]
\centering\footnotesize
\setlength{\tabcolsep}{4pt}
\resizebox{0.85\columnwidth}{!}{%
\begin{tabular}{l cc cc}
\toprule
& \multicolumn{2}{c}{Accuracy} & \multicolumn{2}{c}{Clipping (\%)} \\
\cmidrule(lr){2-3}\cmidrule(lr){4-5}
Method & avg & pass & fresh & replay \\
\midrule
$\tau_{\max} = 1$       & 16.49 & 29.59 & 0.055 & 0.266 \\
\textbf{$\tau_{\max} = 10$} (Ours) & \textbf{16.58} & \textbf{32.35} & 0.060 & 0.403 \\
$\tau_{\max} = 30$      & 16.01 & 27.34 & 0.058 & 0.654 \\
$\tau_{\max} = 60$      & 15.28 & 26.66 & 0.058 & 0.875 \\
\bottomrule
\end{tabular}%
}
\caption{Age-eviction horizon $\tau_{\max}$ ablation at Qwen3-1.7B-Base.
         The clipping columns report the fraction of importance ratios clipped during the PPO update, decomposed into fresh and replay batch portions.
         \textbf{Bold} marks the best.}
\label{tab:age-clipping}
\end{table}

\subsection{Cross-Scale Main Comparison}
\label{sec:exp-main}

Table~\ref{tab:cross-scale-main} reports the main result.
The table separates the benefit of our method from the no-replay GRPO reference and from the naive replay baselines.
The naive baselines apply uniform and $\sigma_g$ priorities under pool composition with query-level buffer entries.
The naive baselines have their batch size matched to fresh-anchored ours, so any gap to our method is attributable to composition and rollout-level $|A_i|$ priority rather than to replay budget.

Our method improves over both baseline families at every scale.
The gain over GRPO on the five-benchmark average is $+0.61$~pp at 0.6B, $+0.60$~pp at 1.7B, and $+1.66$~pp at 4B.
The gain also extends to PASS@$K$.
Our method exceeds GRPO by $+6.86$~pp on pass, and the best naive baseline by $+2.49$~pp on avg and $+5.61$~pp on pass, as shown in the rightmost block of Table~\ref{tab:cross-scale-main}.

Naive replay gives no consistent gain over GRPO under either uniform or $\sigma_g$ priority, and at 4B it falls below the no-replay baseline by $0.83$ and $1.82$~pp on the five-benchmark average.
Reusing rollouts is therefore not beneficial on its own, and without a priority that selects which rollouts to reuse it can cost accuracy.
The design-space ablations in Section~\ref{sec:exp-design} examine which axes of our design separate it from these baselines.

\subsection{Ablations}
\label{sec:exp-design}

We validate each design choice of Section~\ref{sec:method} at 1.7B, with the composition ablation extended to 0.6B and 4B.
We sweep alternative variants along two axes.
The first axis is staleness handling and includes age eviction ($\tau_{\max}$) and fresh-anchored composition.
Section~\ref{sec:exp-staleness} covers this axis.
The second axis is the rollout-level priority signal, comparing $|A_i|$ against uniform and $\sigma_g$.
Section~\ref{sec:exp-priority-signal} covers this axis.
Each table reports five-benchmark aggregate avg and pass under the $k$ convention of Table~\ref{tab:cross-scale-main}.

\subsubsection{Age Eviction and Fresh Anchoring}
\label{sec:exp-staleness}
\begin{table}[t]
\centering\scriptsize
\setlength{\tabcolsep}{3pt}
\renewcommand{\arraystretch}{1.0}
\resizebox{\ifdim\width>\columnwidth \columnwidth\else \width\fi}{!}{%
\begin{tabular}{l cc cc cc}
\toprule
\multirow{2}{*}[-0.5ex]{Method, Composition}
 & \multicolumn{2}{c}{Qwen3-0.6B}
 & \multicolumn{2}{c}{Qwen3-1.7B}
 & \multicolumn{2}{c}{Qwen3-4B} \\
\cmidrule(lr){2-3}\cmidrule(lr){4-5}\cmidrule(lr){6-7}
 & avg & pass & avg & pass & avg & pass \\
\midrule
GRPO (no replay)                              & 11.20          & 21.32          & 15.75          & 27.64          & 26.71          & 40.92          \\
\midrule
Pool composition                              & 11.29          & 21.02          & 16.00          & 27.08          & 23.54          & 39.08          \\
Fresh-anchored (Ours)                & \textbf{11.58} & \textbf{23.24} & \textbf{16.58} & \textbf{32.35} & \textbf{31.06} & \textbf{50.98} \\
\bottomrule
\end{tabular}%
}
\caption{Cross-scale composition-mode comparison between fresh-anchored composition for ours and pool composition.
         Both are under rollout-level $|A_i|$ priority.
         avg and pass are five-benchmark aggregates under Table~\ref{tab:cross-scale-main}'s $k$ convention.
         \textbf{Bold} marks the better composition at each scale.}
\label{tab:pool-vs-concat-cross-scale}
\end{table}

Table~\ref{tab:age-clipping} sweeps the age-eviction horizon $\tau_{\max} \in \{1, 10, 30, 60\}$ at 1.7B with other axes at default.
Allowing samples to age past $\tau_{\max} = 10$ reduces both metrics.
As $\tau_{\max}$ grows from $10$ to $60$, avg drops from $16.58$ to $15.28$ and pass drops from $32.35$ to $26.66$.
Aging past the cap also induces asymmetric importance-ratio clipping.
Replay-side clipping grows monotonically with the cap and reaches $0.88\%$ at $\tau_{\max} = 60$.
Fresh-side clipping is approximately constant.
The asymmetric clipping alongside the performance drop indicates that an unmanaged buffer accumulates policy lag rather than stochastic noise.\footnote{%
All clipping rates stay below $1\%$.
Over $99\%$ of importance ratios therefore fall within the DAPO clip-higher window from $1 - 0.2$ to $1 + 0.28$.%
}
The per-step clipping trajectory is in Figure~\ref{fig:clipping-curves} of Appendix~\ref{app:clipping-curves}.
At the opposite end, $\tau_{\max} = 1$ matches our default on avg with $16.49$ against $16.58$, but is $-2.76$~pp lower on pass.
At $\tau_{\max} = 1$ the buffer holds at most one step's worth of rollouts at any time, so the replay portion has little to recycle and contributes minimal additional signal.
We adopt $\tau_{\max} = 10$ as the smallest age at which the buffer accumulates enough rollouts to make the priority-driven replay meaningful, while keeping policy lag bounded.

Age eviction alone does not fully bound policy lag at scale.
The composition mode also affects performance across the three tested scales.
Table~\ref{tab:pool-vs-concat-cross-scale} fixes $\tau_{\max} = 10$ and replaces fresh-anchored composition with pool composition.
The effect is scale-dependent.
At 0.6B the avg gap between the two compositions is negligible at $0.29$~pp and the pass gap is modest at $+2.22$~pp.
Both gaps widen from 0.6B to 4B.
At 1.7B the pass gain over GRPO is removed under pool composition, with $27.08$ against the GRPO value of $27.64$.
At 4B both avg and pass fall below GRPO outright, with avg $23.54$ against the GRPO value of $26.71$.

\subsubsection{Rollout-Level $|A_i|$ Against Query-Level Baselines}
\label{sec:exp-priority-signal}
\begin{table}[t]
\centering\footnotesize
\setlength{\tabcolsep}{12pt}
\resizebox{0.9\columnwidth}{!}{%
\begin{tabular}{l l cc}
\toprule
Method & Level & avg & pass \\
\midrule
GRPO (no replay) & n/a     & 15.75 & 27.64 \\
\midrule
\multirow{2}{*}{uniform}    & query   & 15.94 & 27.48 \\
                            & rollout & 16.19 & 28.96 \\
\midrule
\multirow{2}{*}{$\sigma_g$} & query   & 15.43 & 29.04 \\
                            & rollout & 16.04 & 29.16 \\
\midrule
Ours with $|A_i|$     & rollout & \textbf{16.58} & \textbf{32.35} \\
\bottomrule
\end{tabular}%
}
\caption{Priority signal and level ablation at Qwen3-1.7B-Base.
         All rows share fresh-anchored composition, $r=0.5$, and $\tau_{\max}=10$, so the rollout-level uniform row is an informative-sampling control that matches our method's replay volume and differs only in the priority signal.
         The $|A_i|$ signal is rollout-only by design.
         avg and pass are five-benchmark aggregates under Table~\ref{tab:cross-scale-main}'s $k$ convention.
         \textbf{Bold} marks the best per column.}
\label{tab:priority-signal}
\end{table}

Table~\ref{tab:priority-signal} ablates the priority signal at 1.7B across two axes.
The signal axis covers uniform, $\sigma_g$, and $|A_i|$.
The level axis covers query and rollout.
Uniform and $\sigma_g$ are normally query-level signals.
They can also be applied at the rollout level, which gives a small lift over their query-level counterparts.
The larger effect is the priority signal itself.
Rollout-level $|A_i|$ leads on both avg and pass, with the widest gap on pass at $+3.19$~pp over the next-best $\sigma_g$.
This indicates that rollout-level buffer entries alone are only a precondition.
The gain comes from a signal that varies within a group.
$|A_i|$ provides such variation by construction, while $\sigma_g$ is group-constant and does not.
Rollout-level $|A_i|$ and rollout-level uniform draw the same number of replay rollouts from a buffer that admits only nonzero-advantage rollouts, so the $+0.39$~pp avg and $+3.39$~pp pass gap between them reflects which rollouts are selected rather than how many informative ones are added.
Replay-free informative sampling such as dynamic sampling and oversample-and-filter raises the share of nonzero-advantage groups by generating more rollouts, so it pays on generation while replay pays on the learner.
Holding the number of informative rollouts fixed, as this control does, therefore separates prioritization from that filtering within a fixed generation budget.
We adopt rollout-level $|A_i|$.

\section{Response-Length and Efficiency Analysis}
\label{sec:dynamics}

The effect of replay in this setting is not limited to accuracy.
We additionally evaluate each post-trained model with AES~\citep{o1-pruner}.
AES was originally proposed to score reasoning-model efficiency as the joint trade-off between accuracy and generation length.

\subsection{Accuracy Efficiency Score (AES)}
\label{sec:dyn-aes}
\begin{table}[t]
\centering\footnotesize
\setlength{\tabcolsep}{5pt}
\resizebox{\ifdim\width>\columnwidth \columnwidth\else \width\fi}{!}{%
\begin{tabular}{l ccc}
\toprule
\multirow{2}{*}{Method}
 & \multicolumn{3}{c}{AES per Qwen3-Base scale} \\
\cmidrule(lr){2-4}
 & 0.6B & 1.7B & 4B \\
\midrule
\multicolumn{4}{l}{\emph{Against untrained base model}} \\
GRPO          & $-0.146$ & $+0.474$ & $-0.067$ \\
\textbf{Ours} & $\mathbf{+0.058}$ & $\mathbf{+0.718}$ & $\mathbf{+0.585}$ \\
\midrule
\multicolumn{4}{l}{\emph{Against no-replay GRPO baseline}} \\
uniform       & $+0.016$ & $+0.008$ & $-0.001$ \\
$\sigma_g$    & $-0.245$ & $+0.046$ & $-0.075$ \\
\textbf{Ours} & $\mathbf{+0.177}$ & $\mathbf{+0.221}$ & $\mathbf{+0.292}$ \\
\bottomrule
\end{tabular}%
}
\caption{Token-based AES at three Qwen3-Base scales.
         The top block is referenced to the untrained Base init and the bottom block to the no-replay GRPO baseline.
         Reference values are listed in Appendix~\ref{app:aes-references}.
         Naive baselines use pool composition with query-level priority, and \textbf{bold} marks the best AES per scale.}
\label{tab:aes-base}
\end{table}

\begin{table}[t]
\centering\footnotesize
\setlength{\tabcolsep}{6pt}
\resizebox{\ifdim\width>\columnwidth \columnwidth\else \width\fi}{!}{%
\begin{tabular}{l ccc}
\toprule
\multirow{2}{*}{Scale} & \multicolumn{2}{c}{$\mu_{\checkmark} / \mu_{\times}$} & \multirow{2}{*}{$\Delta$ (\%)} \\
\cmidrule(lr){2-3}
 & GRPO & Ours & \\
\midrule
Qwen3-0.6B & $0.139$ & $0.185$ & $\mathbf{+33.6}$ \\
Qwen3-1.7B & $0.224$ & $0.277$ & $\mathbf{+23.4}$ \\
Qwen3-4B & $0.119$ & $0.147$ & $\mathbf{+25.2}$ \\
\bottomrule
\end{tabular}%
}
\caption{Ratio of mean response length on correct rollouts to mean response length on incorrect rollouts, pooled across the five-benchmark set.
         Lengths are pooled over rollouts, and $\Delta$ is the relative change of that ratio from GRPO to ours.
         A positive $\Delta$ means correct rollouts grew longer relative to incorrect ones.}
\label{tab:length-quality}
\end{table}

\begin{figure*}[t]
  \centering
  \includegraphics[width=\textwidth]{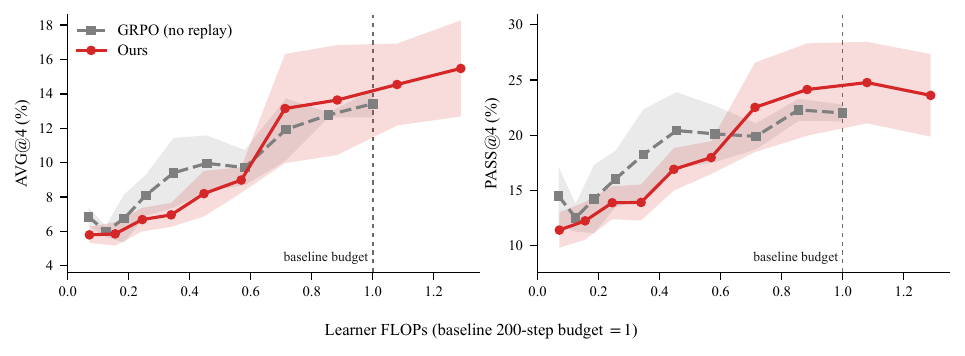}
  \caption{Accuracy against learner FLOPs at Qwen3-4B-Base over three training seeds, normalized so each seed's baseline $200$-step budget is $1$.
           The left panel is AVG@$K$ and the right is PASS@$K$.
           Points are evaluated every $20$ training steps on the four contest benchmarks (AIME25, AIME26, HMMT-F25, HMMT-F26) at $k = 4$, and bands are $\pm 1$ standard deviation across seeds.}
  \label{fig:matched-flops}
\end{figure*}

AES combines normalized changes in accuracy and generation
length into a single efficiency value,
\begin{equation}
\label{eq:aes}
  \mathrm{AES} \;=\; \Delta L \;+\; \beta\,\Delta\mathrm{Acc}.
\end{equation}
The length term $\Delta L = (L_{\text{ref}} - L)/L_{\text{ref}}$ is positive when
the method generates shorter responses than the reference.
The accuracy term
$\Delta\mathrm{Acc} = (A - A_{\text{ref}})/A_{\text{ref}}$ is positive when it is
more accurate.
A larger AES is therefore better on both axes.
The weight is $\beta = 3$ when $\Delta\mathrm{Acc} \geq 0$ and $\beta = 5$
otherwise.
This asymmetry penalizes accuracy drops more than it rewards gains.
Because both terms are measured against the reference, the same method scores
differently under different references.
Table~\ref{tab:aes-base} reports two such references.
The top block uses the untrained base model and measures the absolute efficiency
of post-training.
The bottom block uses the no-replay GRPO baseline and measures
the incremental efficiency of replay over no-replay.
GRPO itself sits at zero in the bottom block by construction.
The two blocks therefore answer distinct questions and should not be compared directly across blocks.
Under this metric our method is the highest at every scale under both references.

\vspace{-0.1cm}

\paragraph{Absolute Efficiency (Base Reference)}
GRPO's AES against Base is negative at 0.6B and 4B, where its accuracy gain over Base is offset by length inflation, and our method is above it at all three scales.

\vspace{-0.1cm}

\paragraph{Incremental Efficiency (GRPO Reference)}
Under the GRPO reference, rollout-level $|A_i|$ leads at every scale and is the only condition positive at all three.
The two naive baselines instead straddle zero.
Replay therefore improves token efficiency only under the $|A_i|$ priority.
The 0.6B margin is the smallest of the three, in line with the higher share of all-wrong groups at 0.6B (Appendix~\ref{app:effective-batch}), which leaves fewer mixed-reward groups from which high-$|A_i|$ candidates can be drawn.
The full $12$-configuration AES ranking at 1.7B across the $\alpha$, $\tau_{\max}$, and $r$ sweep is in Appendix~\ref{app:aes-full}.

\vspace{-0.1cm}

\paragraph{Where the Efficiency Gain Comes From}
The AES gain is a redistribution of length rather than a uniform reduction in generation tokens.
Correct rollouts became longer and incorrect ones shorter, so the ratio of mean response length on correct rollouts to that on incorrect ones rises at every scale against the no-replay GRPO baseline, by $+33.6$, $+23.4$, and $+25.2\%$ (Table~\ref{tab:length-quality}, with the two sides in Appendix~\ref{app:aes-references}).
One reading of this follows from what $|A_i|$ selects.
A correct rollout reaches high $|A_i|$ only in a mostly wrong group, and an incorrect rollout only in a mostly correct one.
Replay therefore reinforces the rare correct solutions of hard prompts, which are the long ones, and suppresses the rare failures of easy prompts.
Appendix~\ref{app:kprobe} measures this composition on the untrained base models, where the correct side is the minority in $99\%$ of the 4B groups that carry the highest priority.
We offer this as a conjecture rather than a tested mechanism, since we do not vary the two directions independently.

\subsection{Compute-Matched Efficiency}
\label{sec:dyn-compute}

We additionally analyze the 4B runs at matched compute.
Learner FLOPs are proportional to the trained tokens of each step, so we compare our method and the no-replay baseline at equal trained tokens rather than at equal training steps.
Figure~\ref{fig:matched-flops} plots both conditions over three training seeds.
At the baseline's full budget, marked in the figure, our method leads by $+0.75$~pp on AVG@$K$ and $+2.50$~pp on PASS@$K$.
It continues to its own $200$ steps at $1.29$ times that budget and ends $+2.05$~pp and $+1.59$~pp above the baseline.
Both conditions draw the same fresh rollouts per step and our method trains more tokens per step, so at the marked budget it has seen only $86\%$ of the baseline's training prompts.
Leading on less new data supports the premise behind the buffer, that a single pass leaves learning signal in the rollouts GRPO discards.
Appendices~\ref{app:flops-accounting} and~\ref{app:overhead} report the FLOPs convention, the step-wise curve, and the measured overhead.

\section{Conclusion}
\label{sec:conclusion}

We presented an experience replay buffer for GRPO with two main findings.
First, the naive replay baselines we test give no consistent gain over GRPO across three Qwen3-Base scales of 0.6B, 1.7B, and 4B on DeepScaleR-Preview, and at 4B they fall below it on the five-benchmark average.
Reuse without an appropriate priority can therefore cost accuracy rather than add to it.

Second, our method combines fresh-anchored composition with age eviction and rollout-level $|A_i|$ priority.
It yields a positive margin at every scale, reaching $+1.66$~pp on the five-benchmark average at 4B.
On the contest benchmarks at 4B the pass-side gains reach $+13.3$~pp on AIME26 and $+11.1$~pp on HMMT-F25.

Per-axis ablations support each design choice.
Age eviction bounds policy lag through a step-level cap, and fresh-anchored composition contributes the on-policy anchor.
At 1.7B, rollout-level $|A_i|$ outperforms both uniform and $\sigma_g$ on both metrics.
Each baseline is tested at query and rollout level.
The widest pass gap is $+3.19$~pp over rollout-level $\sigma_g$.
Fresh-anchored composition further adds $+0.58$~pp avg and $+5.27$~pp pass over pool composition.

Beyond accuracy, our method is the only condition with a positive token-budget AES margin over GRPO at every scale.
A correctness-conditioned length decomposition in Section~\ref{sec:dyn-aes} shows the model lengthens correct rollouts relative to incorrect ones, raising their length ratio by $+23$ to $+34\%$ across scales.
These results indicate that rollout-level replay modifies the trained model's response distribution, not only its accuracy.

\section*{Limitations}
\label{sec:limitations}

Our study is confined to math reasoning with binary verifiable rewards.
We use DAPO-style GRPO with asymmetric clipping as the underlying trainer.
We do not study replay under non-verifiable rewards such as learned reward models, nor under other RL algorithms such as PPO or REINFORCE.
Whether the rollout-level $|A_i|$ priority and age eviction generalize to those settings is an open question.

Our update is biased with respect to the on-policy GRPO objective, and Section~\ref{sec:method} states the three sources.
Replayed advantages stay frozen at their birth-step values, the prioritized draw is deliberately left uncorrected because an importance-sampling weight would cancel the reweighting toward high-$|A_i|$ rollouts that motivates replay in the first place, and replayed rollouts carry a policy lag of up to $\tau_{\max}$ steps.
Age eviction, the fresh anchor, and clipping bound these departures rather than remove them.
The $\alpha = 0$ setting of Table~\ref{tab:priority-signal} is the in-expectation endpoint of a fully corrected draw, but we do not run the weighted estimator itself, we do not refresh replayed advantages, and we do not characterize the bias analytically.
Our evidence for the trade-off is empirical.
Refreshing replayed advantages online, in the spirit of online difficulty filtering~\citep{bae2026online, pcl}, is left to future work.

We do not compare against replay-free informative sampling.
Dynamic sampling and oversample-and-filter fill an update with nonzero-advantage groups by generating additional rollouts, so their cost falls on generation, whereas replay adds no generation and pays on the learner instead.
A fair comparison therefore has to be matched on total compute rather than on training steps, which we leave to future work.
The uniform-replay control of Table~\ref{tab:priority-signal} isolates prioritization from informative filtering within replay, but it does not establish whether the same accuracy is reachable by spending the extra budget on generation.

No run is reported beyond global step~200, so longer-horizon dynamics are not characterized.
These include failure modes that may emerge only after substantially more updates and the eventual ceiling of the recycling benefit.
Similarly, the body scaling sweep is limited to Qwen3-Base at 0.6B, 1.7B, and 4B trained on the DeepScaleR-Preview prompt set, and the compute-matched comparison of Section~\ref{sec:dyn-compute} is measured at 4B alone.
Transfer to larger model scales, to multi-turn or agentic tasks, and to non-math domains is not validated.

Cross-scale trends within the tested range are not perfectly monotonic, and the range is bounded on both sides.
The 0.6B model in particular shows smaller margins than 1.7B and 4B on several metrics.
A plausible contributing factor is a scale-dependent shift in prompt difficulty, as suggested by Appendix~\ref{app:effective-batch}.
Only $34\%$ of generated groups are mixed-reward at 0.6B during training and this rises to $49\%$ at 4B.
The rare-correct rollouts that high-$|A_i|$ priority targets are therefore scarce in absolute count at smaller scales.
At the other end, a single 8B run under a reduced training batch shows no gain on the five-benchmark average against its own no-replay baseline.
Appendix~\ref{app:kprobe} reports that run together with a start-of-training solve-count distribution that has moved away from the region $|A_i|$ targets, which is the same misalignment seen from the opposite direction.
RLVR is sensitive to the alignment between prompt difficulty and current policy capability.
Replay may amplify this sensitivity because the buffer keeps recycling high-$|A_i|$ candidates from the same difficulty distribution rather than rebalancing it.
We do not study a remedy in this work.
Difficulty-adaptive prompt selection or buffer composition is a direction for future work.

All evaluation is automatic via a programmatic answer checker.
We do not run human studies on the quality of the responses produced by our method.
The asymmetric length pattern in Section~\ref{sec:dynamics} therefore reflects a verifiable-reward view of efficiency rather than a perceived-quality view.

The ablation tables use a single training seed of $42$ per run.
Evaluation uses a single primary $k = 8$ sampling wave at seed $42$ for every benchmark.
The $k = 8$ wave is kept as the MATH-500 budget, and a second $k = 8$ wave at seed $43$ is added for AIME and HMMT to reach $k = 16$.
The same two seeds are used for every condition at every scale.
Any seed-dependent fluctuation in the per-example samples is therefore shared across the comparisons in the body tables.
The detailed seed policy and the per-wave sample\_id ranges are in Appendix~\ref{app:training-setup} (Training and Sampling Seeds).

\section*{Ethical Considerations}
\label{sec:ethics}

The contribution of this work is a sample-efficient training procedure for GRPO on verifiable-reward math reasoning.
The work is foundational and not tied to a specific deployed application.
Any efficiency improvement in RLVR post-training is dual-use in the sense that it lowers the cost of producing reasoning models.
The method could therefore be applied for both beneficial and harmful purposes by downstream developers.
Misuse scenarios specific to math reasoning include automated solving of homework or examination problem sets, which constitutes academic dishonesty.
They also include large-scale generation of synthetic competition-style solutions, which could contaminate future training corpora if posted publicly without provenance markers.
We release no new model or dataset artifact with this work, which limits direct deployment risk.
See Appendix~\ref{app:training-setup} for the artifacts and licenses used.
The compute footprint of the full sweep is roughly $2{,}500$ H100-hours, as reported in Appendix~\ref{app:training-setup}.
This is modest relative to large-scale post-training pipelines but compounds at scale.
We report it transparently to support reproducibility and discourage redundant re-runs.
We do not study replay under non-verifiable rewards.
No claim is made about transfer to settings with learned reward models or human-feedback signals.

\makeatletter
\ifacl@anonymize\else
\section*{Acknowledgements}

This work was supported by the BK21 FOUR program of the Education and Research Program for Future ICT Pioneers, Seoul National University in 2026, Institute of Information \& Communications Technology Planning \& Evaluation (IITP) grant funded by the Korea government (MSIT) [NO.RS-2021-II211343, Artificial Intelligence Graduate School Program (Seoul National University)], the National Research Foundation of Korea (NRF) grant funded by the Korea government (MSIT) (No.~2022R1A3B1077720), and Institute of Information \& Communications Technology Planning \& Evaluation (IITP) grant funded by the Korea government (MSIT) [NO. RS-2022-II220959].
This research was also conducted as part of the Sovereign AI Foundation Model Project (Data Track), organized by the Ministry of Science and ICT (MSIT) and supported by the National Information Society Agency (NIA), S.~Korea (Grant No.~\mbox{2026-AIData-WII01}).
This work was also supported by the National AI Project of the Ministry of Science and ICT and the AIIS.
\fi
\makeatother

\bibliography{references}

\begin{thebibliography}{85}
\providecommand{\natexlab}[1]{#1}

\bibitem[{Andrychowicz et~al.(2017)Andrychowicz, Wolski, Ray, Schneider, Fong,
  Welinder, McGrew, Tobin, Abbeel, and Zaremba}]{her}
Marcin Andrychowicz, Filip Wolski, Alex Ray, Jonas Schneider, Rachel Fong,
  Peter Welinder, Bob McGrew, Josh Tobin, Pieter Abbeel, and Wojciech Zaremba.
  2017.
\newblock \href
  {https://proceedings.neurips.cc/paper/2017/hash/453fadbd8a1a3af50a9df4df899537b5-Abstract.html}
  {Hindsight experience replay}.
\newblock In \emph{Advances in Neural Information Processing Systems
  ({NeurIPS})}.

\bibitem[{Arnal et~al.(2026)Arnal, Cabannes, Cohen, Kempe, and
  Munos}]{arnal2026efficientrlreplay}
Charles Arnal, Vivien Cabannes, Taco Cohen, Julia Kempe, and R{\'e}mi Munos.
  2026.
\newblock \href {https://arxiv.org/abs/2604.08706} {Efficient {RL} training for
  {LLMs} with experience replay}.
\newblock \emph{arXiv preprint arXiv:2604.08706}.

\bibitem[{Bae et~al.(2026)Bae, Hong, Lee, Kim, Nam, and Kwak}]{bae2026online}
Sanghwan Bae, Jiwoo Hong, Min~Young Lee, Hanbyul Kim, Jeongyeon Nam, and
  Donghyun Kwak. 2026.
\newblock \href {https://doi.org/10.18653/v1/2026.eacl-long.30} {Online
  difficulty filtering for reasoning oriented reinforcement learning}.
\newblock In \emph{Proceedings of the 19th Conference of the {E}uropean Chapter
  of the {A}ssociation for {C}omputational {L}inguistics (Volume 1: Long
  Papers)}, pages 700--719, Rabat, Morocco. Association for Computational
  Linguistics.

\bibitem[{Baroian and Berger(2026)}]{prompt-replay}
Andrei Baroian and Rutger Berger. 2026.
\newblock \href {https://arxiv.org/abs/2603.21177} {Prompt replay: Speeding up
  {GRPO} with on-policy reuse of high-signal prompts}.
\newblock \emph{arXiv preprint arXiv:2603.21177}.

\bibitem[{Bartoldson et~al.(2025)Bartoldson, Venkatraman, Diffenderfer, Jain,
  Ben-Nun, Lee, Kim, Obando-Ceron, Bengio, and Kailkhura}]{tba}
Brian~R. Bartoldson, Siddarth Venkatraman, James Diffenderfer, Moksh Jain, Tal
  Ben-Nun, Seanie Lee, Minsu Kim, Johan Obando-Ceron, Yoshua Bengio, and Bhavya
  Kailkhura. 2025.
\newblock \href {https://arxiv.org/abs/2503.18929} {Trajectory balance with
  asynchrony: Decoupling exploration and learning for fast, scalable {LLM}
  post-training}.
\newblock In \emph{Advances in Neural Information Processing Systems
  ({NeurIPS})}.

\bibitem[{Bellemare et~al.(2017)Bellemare, Dabney, and Munos}]{c51}
Marc~G. Bellemare, Will Dabney, and R{\'e}mi Munos. 2017.
\newblock \href {https://proceedings.mlr.press/v70/bellemare17a.html} {A
  distributional perspective on reinforcement learning}.
\newblock In \emph{Proceedings of the 34th International Conference on Machine
  Learning ({ICML})}, volume~70 of \emph{Proceedings of Machine Learning
  Research}, pages 449--458. {PMLR}.

\bibitem[{Cassirer et~al.(2021)Cassirer, Barth-Maron, Brevdo, Ramos, Boyd,
  Sottiaux, and Kroiss}]{reverb}
Albin Cassirer, Gabriel Barth-Maron, Eugene Brevdo, Sabela Ramos, Toby Boyd,
  Thibault Sottiaux, and Manuel Kroiss. 2021.
\newblock \href {https://arxiv.org/abs/2102.04736} {{Reverb}: A framework for
  experience replay}.
\newblock \emph{arXiv preprint arXiv:2102.04736}.

\bibitem[{Chen et~al.(2021)Chen, Wang, Zhou, and Ross}]{redq}
Xinyue Chen, Che Wang, Zijian Zhou, and Keith~W. Ross. 2021.
\newblock \href {https://openreview.net/forum?id=AY8zfZm0tDd} {Randomized
  ensembled double {Q}-learning: Learning fast without a model}.
\newblock In \emph{International Conference on Learning Representations
  ({ICLR})}.

\bibitem[{Dong et~al.(2023)Dong, Xiong, Goyal, Zhang, Chow, Pan, Diao, Zhang,
  Shum, and Zhang}]{raft}
Hanze Dong, Wei Xiong, Deepanshu Goyal, Yihan Zhang, Winnie Chow, Rui Pan,
  Shizhe Diao, Jipeng Zhang, Kashun Shum, and Tong Zhang. 2023.
\newblock \href {https://openreview.net/forum?id=m7p5O7zblY} {{RAFT}: Reward
  r{A}nked {F}ine-{T}uning for generative foundation model alignment}.
\newblock \emph{Transactions on Machine Learning Research}.

\bibitem[{Dong et~al.(2025)Dong, Jiang, Tao, Liu, Zhang, Mou, Cao, Ma, Chen,
  Li, Jin, Huang, Li, and Li}]{rlplus}
Yihong Dong, Xue Jiang, Yongding Tao, Huanyu Liu, Kechi Zhang, Lili Mou, Rongyu
  Cao, Yingwei Ma, Jue Chen, Binhua Li, Zhi Jin, Fei Huang, Yongbin Li, and
  Ge~Li. 2025.
\newblock \href {https://arxiv.org/abs/2508.00222} {{RL-PLUS}: Countering
  capability boundary collapse of {LLMs} in reinforcement learning with
  hybrid-policy optimization}.
\newblock \emph{arXiv preprint arXiv:2508.00222}.

\bibitem[{D'Oro et~al.(2023)D'Oro, Schwarzer, Nikishin, Bacon, Bellemare, and
  Courville}]{srsac}
Pierluca D'Oro, Max Schwarzer, Evgenii Nikishin, Pierre-Luc Bacon, Marc~G.
  Bellemare, and Aaron Courville. 2023.
\newblock \href {https://openreview.net/forum?id=OpC-9aBBVJe} {Sample-efficient
  reinforcement learning by breaking the replay ratio barrier}.
\newblock In \emph{International Conference on Learning Representations
  ({ICLR})}.

\bibitem[{Fatemi(2026)}]{fatemi2026prioritized}
Mehdi Fatemi. 2026.
\newblock \href {https://arxiv.org/abs/2601.02648} {Prioritized replay for {RL}
  post-training}.
\newblock \emph{arXiv preprint arXiv:2601.02648}.

\bibitem[{Fedus et~al.(2020)Fedus, Ramachandran, Agarwal, Bengio, Larochelle,
  Rowland, and Dabney}]{fedus2020revisit}
William Fedus, Prajit Ramachandran, Rishabh Agarwal, Yoshua Bengio, Hugo
  Larochelle, Mark Rowland, and Will Dabney. 2020.
\newblock \href {https://proceedings.mlr.press/v119/fedus20a.html} {Revisiting
  fundamentals of experience replay}.
\newblock In \emph{Proceedings of the 37th International Conference on Machine
  Learning ({ICML})}, volume 119 of \emph{Proceedings of Machine Learning
  Research}, pages 3061--3071. PMLR.

\bibitem[{Fu et~al.(2025)Fu, Gao, Shen, Zhu, Mei, He, Xu, Wei, Mei, Wang, Yang,
  Yuan, and Wu}]{areal}
Wei Fu, Jiaxuan Gao, Xujie Shen, Chen Zhu, Zhiyu Mei, Chuyi He, Shusheng Xu,
  Guo Wei, Jun Mei, Jiashu Wang, Tongkai Yang, Binhang Yuan, and Yi~Wu. 2025.
\newblock \href {https://arxiv.org/abs/2505.24298} {{AReaL}: A large-scale
  asynchronous reinforcement learning system for language reasoning}.
\newblock \emph{arXiv preprint arXiv:2505.24298}.

\bibitem[{Fujimoto et~al.(2020)Fujimoto, Meger, and Precup}]{lap}
Scott Fujimoto, David Meger, and Doina Precup. 2020.
\newblock \href
  {https://proceedings.neurips.cc/paper/2020/hash/a3bf6e4db673b6449c2f7d13ee6ec9c0-Abstract.html}
  {An equivalence between loss functions and non-uniform sampling in experience
  replay}.
\newblock In \emph{Advances in Neural Information Processing Systems 33
  ({NeurIPS})}, Online. Curran Associates, Inc.

\bibitem[{Gao et~al.(2024)Gao, Tow, Abbasi, Biderman, Black, DiPofi, Foster,
  Golding, Hsu, Le~Noac'h et~al.}]{lmevalharness}
Leo Gao, Jonathan Tow, Baber Abbasi, Stella Biderman, Sid Black, Anthony
  DiPofi, Charles Foster, Laurence Golding, Jeffrey Hsu, Alain Le~Noac'h, and 1
  others. 2024.
\newblock \href {https://doi.org/10.5281/zenodo.12608602} {A framework for
  few-shot language model evaluation}.
\newblock GitHub repository.

\bibitem[{Gao et~al.(2025)Gao, Kim, Sun, Joachims, Wang, Pang, and Tan}]{pcl}
Zhaolin Gao, Joongwon Kim, Wen Sun, Thorsten Joachims, Sid Wang,
  Richard~Yuanzhe Pang, and Liang Tan. 2025.
\newblock \href {https://arxiv.org/abs/2510.01135} {Prompt curriculum learning
  for efficient {LLM} post-training}.
\newblock \emph{arXiv preprint arXiv:2510.01135}.

\bibitem[{Guo et~al.(2025)Guo, Yang, Zhang, Song, Wang, Zhu, Xu, Zhang, Ma, Bi,
  Zhang, Yu, Wu, Wu, Gou, Shao, Li, Gao, Liu, Xue, Wang, Wu, Feng, Lu, Zhao,
  Deng, Ruan, Dai, Chen, Ji, Li, Lin, Dai, Luo, Hao, Chen, Li, Zhang, Xu, Ding,
  Gao, Qu, Li, Guo, Li, Chen, Yuan, Tu, Qiu, Li, Cai, Ni, Liang, Chen, Dong,
  Hu, You, Gao, Guan, Huang, Yu, Wang, Zhang, Zhao, Wang, Zhang, Xu, Xia,
  Zhang, Zhang, Tang, Zhou, Li, Wang, Li, Tian, Huang, Zhang, Wang, Chen, Du,
  Ge, Zhang, Pan, Wang, Chen, Jin, Chen, Lu, Zhou, Chen, Ye, Wang, Yu, Zhou,
  Pan, Li, Zhou, Wu, Yun, Pei, Sun, Wang, Zeng, Liu, Liang, Gao, Yu, Zhang,
  Xiao, An, Liu, Wang, Chen, Nie, Cheng, Liu, Xie, Liu, Yang, Li, Su, Lin, Li,
  Jin, Shen, Chen, Sun, Wang, Song, Zhou, Wang, Shan, Li, Wang, Wei, Zhang, Xu,
  Li, Zhao, Sun, Wang, Yu, Zhang, Shi, Xiong, He, Piao, Wang, Tan, Ma, Liu,
  Guo, Ou, Wang, Gong, Zou, He, Xiong, Luo, You, Liu, Zhou, Zhu, Huang, Li,
  Zheng, Zhu, Ma, Tang, Zha, Yan, Ren, Ren, Sha, Fu, Xu, Xie, Zhang, Hao, Ma,
  Yan, Wu, Gu, Zhu, Liu, Li, Xie, Song, Pan, Huang, Xu, Zhang, and
  Zhang}]{deepseekr1}
Daya Guo, Dejian Yang, Haowei Zhang, Junxiao Song, Peiyi Wang, Qihao Zhu,
  Runxin Xu, Ruoyu Zhang, Shirong Ma, Xiao Bi, Xiaokang Zhang, Xingkai Yu,
  Yu~Wu, Z.~F. Wu, Zhibin Gou, Zhihong Shao, Zhuoshu Li, Ziyi Gao, Aixin Liu,
  and 175 others. 2025.
\newblock \href {https://doi.org/10.1038/s41586-025-09422-z} {{DeepSeek-R1}
  incentivizes reasoning in {LLMs} through reinforcement learning}.
\newblock \emph{Nature}, 645(8081):633--638.

\bibitem[{Han et~al.(2025)Han, You, Wang, Luo, Yang, Shi, Chen, Zhang, Lan,
  Deng, Ji, Liu, Huang, Zhang, Pan, Wang, Huang, Li, and Wu}]{asyncflow}
Zhenyu Han, Ansheng You, Haibo Wang, Kui Luo, Guang Yang, Wenqi Shi, Menglong
  Chen, Sicheng Zhang, Zeshun Lan, Chunshi Deng, Huazhong Ji, Wenjie Liu,
  Yu~Huang, Yixiang Zhang, Chenyi Pan, Jing Wang, Xin Huang, Chunsheng Li, and
  Jianping Wu. 2025.
\newblock \href {https://arxiv.org/abs/2507.01663} {{AsyncFlow}: An
  asynchronous streaming {RL} framework for efficient {LLM} post-training}.
\newblock \emph{arXiv preprint arXiv:2507.01663}.

\bibitem[{{Harvard--MIT Mathematics Tournament}(2025)}]{hmmt_feb2025}
{Harvard--MIT Mathematics Tournament}. 2025.
\newblock \href {https://www.hmmt.org} {{HMMT} february 2025}.
\newblock Harvard--MIT Mathematics Tournament, February 2025.

\bibitem[{{Harvard--MIT Mathematics Tournament}(2026)}]{hmmt_feb2026}
{Harvard--MIT Mathematics Tournament}. 2026.
\newblock \href {https://www.hmmt.org} {{HMMT} february 2026}.
\newblock Harvard--MIT Mathematics Tournament, February 2026.

\bibitem[{Hessel et~al.(2018)Hessel, Modayil, van Hasselt, Schaul, Ostrovski,
  Dabney, Horgan, Piot, Azar, and Silver}]{rainbow}
Matteo Hessel, Joseph Modayil, Hado van Hasselt, Tom Schaul, Georg Ostrovski,
  Will Dabney, Dan Horgan, Bilal Piot, Mohammad Azar, and David Silver. 2018.
\newblock \href {https://doi.org/10.1609/aaai.v32i1.11796} {Rainbow: Combining
  improvements in deep reinforcement learning}.
\newblock In \emph{Proceedings of the {AAAI} Conference on Artificial
  Intelligence}, volume~32, pages 3215--3222. {AAAI} Press.

\bibitem[{Hong et~al.(2022)Hong, Chen, Lin, Pajarinen, and
  Agrawal}]{topological-replay}
Zhang-Wei Hong, Tao Chen, Yen-Chen Lin, Joni Pajarinen, and Pulkit Agrawal.
  2022.
\newblock \href {https://openreview.net/forum?id=OXRZeMmOI7a} {Topological
  experience replay}.
\newblock In \emph{International Conference on Learning Representations
  ({ICLR})}.

\bibitem[{Horgan et~al.(2018)Horgan, Quan, Budden, Barth-Maron, Hessel, van
  Hasselt, and Silver}]{apex}
Dan Horgan, John Quan, David Budden, Gabriel Barth-Maron, Matteo Hessel, Hado
  van Hasselt, and David Silver. 2018.
\newblock \href {https://arxiv.org/abs/1803.00933} {Distributed prioritized
  experience replay}.
\newblock In \emph{International Conference on Learning Representations
  ({ICLR})}.

\bibitem[{Hu et~al.(2024)Hu, Wu, Shen, Liu, Zhu, Wang, Jiang, Wang, Chen, Chen,
  Fang, Xianyu, Cao, Xu, and Liu}]{openrlhf}
Jian Hu, Xibin Wu, Wei Shen, Jason~Klein Liu, Zilin Zhu, Weixun Wang, Songlin
  Jiang, Haoran Wang, Hao Chen, Bin Chen, Weikai Fang, Xianyu, Yu~Cao, Haotian
  Xu, and Yiming Liu. 2024.
\newblock \href {https://arxiv.org/abs/2405.11143} {{OpenRLHF}: An easy-to-use,
  scalable and high-performance {RLHF} framework}.
\newblock \emph{arXiv preprint arXiv:2405.11143}.

\bibitem[{{Hugging Face}(2025)}]{mathverify}
{Hugging Face}. 2025.
\newblock \href {https://github.com/huggingface/Math-Verify} {Math-verify:
  Mathematical answer verification}.
\newblock GitHub repository.

\bibitem[{Isele and Cosgun(2018)}]{selective-replay}
David Isele and Akansel Cosgun. 2018.
\newblock \href {https://doi.org/10.1609/aaai.v32i1.11595} {Selective
  experience replay for lifelong learning}.
\newblock In \emph{Proceedings of the {AAAI} Conference on Artificial
  Intelligence}, volume~32. {AAAI} Press.

\bibitem[{Jiang et~al.(2025)Jiang, Feng, Quan, Hao, Zhang, Liu, and
  Wang}]{vcrl}
Guochao Jiang, Wenfeng Feng, Guofeng Quan, Chuzhan Hao, Yuewei Zhang, Guohua
  Liu, and Hao Wang. 2025.
\newblock \href {https://arxiv.org/abs/2509.19803} {{VCRL}: Variance-based
  curriculum reinforcement learning for large language models}.
\newblock \emph{arXiv preprint arXiv:2509.19803}.

\bibitem[{Kaplan et~al.(2020)Kaplan, McCandlish, Henighan, Brown, Chess, Child,
  Gray, Radford, Wu, and Amodei}]{kaplan2020scaling}
Jared Kaplan, Sam McCandlish, Tom Henighan, Tom~B. Brown, Benjamin Chess, Rewon
  Child, Scott Gray, Alec Radford, Jeffrey Wu, and Dario Amodei. 2020.
\newblock \href {https://arxiv.org/abs/2001.08361} {Scaling laws for neural
  language models}.
\newblock \emph{arXiv preprint arXiv:2001.08361}.

\bibitem[{Kapturowski et~al.(2019)Kapturowski, Ostrovski, Quan, Munos, and
  Dabney}]{r2d2}
Steven Kapturowski, Georg Ostrovski, John Quan, R{\'e}mi Munos, and Will
  Dabney. 2019.
\newblock \href {https://openreview.net/forum?id=r1lyTjAqYX} {Recurrent
  experience replay in distributed reinforcement learning}.
\newblock In \emph{International Conference on Learning Representations
  ({ICLR})}.

\bibitem[{Kwon et~al.(2023)Kwon, Li, Zhuang, Sheng, Zheng, Yu, Gonzalez, Zhang,
  and Stoica}]{vllm}
Woosuk Kwon, Zhuohan Li, Siyuan Zhuang, Ying Sheng, Lianmin Zheng, Cody~Hao Yu,
  Joseph~E. Gonzalez, Hao Zhang, and Ion Stoica. 2023.
\newblock \href {https://doi.org/10.1145/3600006.3613165} {Efficient memory
  management for large language model serving with {PagedAttention}}.
\newblock In \emph{Proceedings of the 29th Symposium on Operating Systems
  Principles ({SOSP})}.

\bibitem[{Le et~al.(2026)Le, Jeon, Vu, Lai, and Yang}]{no-prompt-left-behind}
Thanh-Long~V. Le, Myeongho Jeon, Kim Vu, Viet~Dac Lai, and Eunho Yang. 2026.
\newblock \href {https://openreview.net/forum?id=kiXFIESZKv} {No prompt left
  behind: Exploiting zero-variance prompts in {LLM} reinforcement learning via
  entropy-guided advantage shaping}.
\newblock In \emph{The Fourteenth International Conference on Learning
  Representations ({ICLR})}.

\bibitem[{Le~Roux et~al.(2025)Le~Roux, Bellemare, Lebensold, Bergeron, Greaves,
  Fr{\'e}chette, Pelletier, Thibodeau-Laufer, Toth, and Work}]{topr}
Nicolas Le~Roux, Marc~G. Bellemare, Jonathan Lebensold, Arnaud Bergeron, Joshua
  Greaves, Alex Fr{\'e}chette, Carolyne Pelletier, Eric Thibodeau-Laufer,
  S{\'a}ndor Toth, and Sam Work. 2025.
\newblock \href {https://arxiv.org/abs/2503.14286} {Tapered off-policy
  {REINFORCE}: Stable and efficient reinforcement learning for large language
  models}.
\newblock In \emph{Advances in Neural Information Processing Systems
  ({NeurIPS})}.

\bibitem[{Li et~al.(2026)Li, Zhou, Wang, Xu, Huang, Chu, Wang, Pan, Qu, and
  Qi}]{dyjr}
Long Li, Zhijian Zhou, Tianyi Wang, Weidi Xu, Zuming Huang, Wei Chu, Zhe Wang,
  Shirui Pan, Chao Qu, and Yuan Qi. 2026.
\newblock \href {https://arxiv.org/abs/2603.16157} {{DyJR}: Preserving
  diversity in reinforcement learning with verifiable rewards via dynamic
  {Jensen-Shannon} replay}.
\newblock \emph{arXiv preprint arXiv:2603.16157}.

\bibitem[{Li et~al.(2025)Li, Zhou, Lam, Yang, and Lu}]{repo}
Siheng Li, Zhanhui Zhou, Wai Lam, Chao Yang, and Chaochao Lu. 2025.
\newblock \href {https://arxiv.org/abs/2506.09340} {{RePO}: Replay-enhanced
  policy optimization}.
\newblock \emph{arXiv preprint arXiv:2506.09340}.

\bibitem[{Lightman et~al.(2024)Lightman, Kosaraju, Burda, Edwards, Baker, Lee,
  Leike, Schulman, Sutskever, and Cobbe}]{lightman2023lets}
Hunter Lightman, Vineet Kosaraju, Yuri Burda, Harrison Edwards, Bowen Baker,
  Teddy Lee, Jan Leike, John Schulman, Ilya Sutskever, and Karl Cobbe. 2024.
\newblock \href {https://openreview.net/forum?id=v8L0pN6EOi} {Let's verify step
  by step}.
\newblock In \emph{The Twelfth International Conference on Learning
  Representations ({ICLR})}.

\bibitem[{Lillicrap et~al.(2016)Lillicrap, Hunt, Pritzel, Heess, Erez, Tassa,
  Silver, and Wierstra}]{ddpg}
Timothy~P. Lillicrap, Jonathan~J. Hunt, Alexander Pritzel, Nicolas Heess, Tom
  Erez, Yuval Tassa, David Silver, and Daan Wierstra. 2016.
\newblock \href {https://arxiv.org/abs/1509.02971} {Continuous control with
  deep reinforcement learning}.
\newblock In \emph{International Conference on Learning Representations
  ({ICLR})}.

\bibitem[{Lin(1992)}]{lin1992self}
Long-Ji Lin. 1992.
\newblock \href {https://doi.org/10.1007/BF00992699} {Self-improving reactive
  agents based on reinforcement learning, planning and teaching}.
\newblock \emph{Machine Learning}, 8(3--4):293--321.

\bibitem[{Lin et~al.(2025)Lin, Lin, Xie, and Ji}]{cppo}
Zhihang Lin, Mingbao Lin, Yuan Xie, and Rongrong Ji. 2025.
\newblock \href {https://openreview.net/forum?id=SVHerutWxp} {{CPPO}:
  Accelerating the training of group relative policy optimization-based
  reasoning models}.
\newblock In \emph{Advances in Neural Information Processing Systems
  ({NeurIPS})}.

\bibitem[{Liu et~al.(2025{\natexlab{a}})Liu, Si, Narasimhan, and
  Yao}]{cer-language}
Yitao Liu, Chenglei Si, Karthik~R. Narasimhan, and Shunyu Yao.
  2025{\natexlab{a}}.
\newblock \href {https://doi.org/10.18653/v1/2025.acl-long.694} {Contextual
  experience replay for self-improvement of language agents}.
\newblock In \emph{Proceedings of the 63rd Annual Meeting of the Association
  for Computational Linguistics (Volume 1: Long Papers)}, pages 14179--14198,
  Vienna, Austria. Association for Computational Linguistics.

\bibitem[{Liu et~al.(2025{\natexlab{b}})Liu, Wang, Song, and Bian}]{lorr}
Zichuan Liu, Jinyu Wang, Lei Song, and Jiang Bian. 2025{\natexlab{b}}.
\newblock \href {https://arxiv.org/abs/2508.06412} {Sample-efficient {LLM}
  optimization with reset replay}.
\newblock \emph{arXiv preprint arXiv:2508.06412}.

\bibitem[{Luo et~al.(2025{\natexlab{a}})Luo, Shen, He, Wang, Liu, Li, Tan, Cao,
  and Tao}]{o1-pruner}
Haotian Luo, Li~Shen, Haiying He, Yibo Wang, Shiwei Liu, Wei Li, Naiqiang Tan,
  Xiaochun Cao, and Dacheng Tao. 2025{\natexlab{a}}.
\newblock \href {https://arxiv.org/abs/2501.12570} {{O1-Pruner}:
  Length-harmonizing fine-tuning for {O1}-like reasoning pruning}.
\newblock \emph{arXiv preprint arXiv:2501.12570}.

\bibitem[{Luo et~al.(2025{\natexlab{b}})Luo, Tan, Wong, Shi, Tang, Roongta,
  Cai, Luo, Zhang, Li, Popa, and Stoica}]{deepscaler}
Michael Luo, Sijun Tan, Justin Wong, Xiaoxiang Shi, William~Y. Tang, Manan
  Roongta, Colin Cai, Jeffrey Luo, Tianjun Zhang, Li~Erran Li, Raluca~Ada Popa,
  and Ion Stoica. 2025{\natexlab{b}}.
\newblock {DeepScaleR}: Surpassing {O1-Preview} with a 1.5{B} model by scaling
  {RL}.
\newblock Notion Blog.
\newblock
  \url{https://pretty-radio-b75.notion.site/DeepScaleR-Surpassing-O1-Preview-with-a-1-5B-Model-by-Scaling-RL-19681902c1468005bed8ca303013a4e2}.

\bibitem[{Ma et~al.(2026)Ma, Zeng, Song, Cui, Zhao, Liu, and
  Elhoseiny}]{freshper}
Weiyu Ma, Yongcheng Zeng, Yan Song, Xinyu Cui, Jian Zhao, Xuhui Liu, and
  Mohamed Elhoseiny. 2026.
\newblock \href {https://arxiv.org/abs/2604.16918} {Freshness-aware prioritized
  experience replay for {LLM}/{VLM} reinforcement learning}.
\newblock \emph{arXiv preprint arXiv:2604.16918}.

\bibitem[{Mao et~al.(2025)Mao, Xiao, Pang, and Liu}]{fspo}
Hanyi Mao, Quanjia Xiao, Lei Pang, and Haixiao Liu. 2025.
\newblock \href {https://arxiv.org/abs/2509.09177} {Clip your sequences fairly:
  Enforcing length fairness for sequence-level {RL}}.
\newblock \emph{arXiv preprint arXiv:2509.09177}.

\bibitem[{{Mathematical Association of America}(2025)}]{aime2025}
{Mathematical Association of America}. 2025.
\newblock \href {https://artofproblemsolving.com/wiki/index.php/2025_AIME_I}
  {{AIME} 2025 problems}.
\newblock American Invitational Mathematics Examination, February 2025.

\bibitem[{{Mathematical Association of America}(2026)}]{aime2026}
{Mathematical Association of America}. 2026.
\newblock \href {https://artofproblemsolving.com/wiki/index.php/2026_AIME_I}
  {{AIME} 2026 problems}.
\newblock American Invitational Mathematics Examination, February 2026.

\bibitem[{{MiniMax}(2025)}]{minimaxm1}
{MiniMax}. 2025.
\newblock \href {https://arxiv.org/abs/2506.13585} {{MiniMax-M1}: Scaling
  test-time compute efficiently with lightning attention}.
\newblock \emph{arXiv preprint arXiv:2506.13585}.

\bibitem[{Mnih et~al.(2015)Mnih, Kavukcuoglu, Silver, Rusu, Veness, Bellemare,
  Graves, Riedmiller, Fidjeland, Ostrovski, Petersen, Beattie, Sadik,
  Antonoglou, King, Kumaran, Wierstra, Legg, and Hassabis}]{dqn}
Volodymyr Mnih, Koray Kavukcuoglu, David Silver, Andrei~A. Rusu, Joel Veness,
  Marc~G. Bellemare, Alex Graves, Martin Riedmiller, Andreas~K. Fidjeland,
  Georg Ostrovski, Stig Petersen, Charles Beattie, Amir Sadik, Ioannis
  Antonoglou, Helen King, Dharshan Kumaran, Daan Wierstra, Shane Legg, and
  Demis Hassabis. 2015.
\newblock \href {https://doi.org/10.1038/nature14236} {Human-level control
  through deep reinforcement learning}.
\newblock \emph{Nature}, 518(7540):529--533.

\bibitem[{Nikishin et~al.(2022)Nikishin, Schwarzer, D'Oro, Bacon, and
  Courville}]{primacy-bias}
Evgenii Nikishin, Max Schwarzer, Pierluca D'Oro, Pierre-Luc Bacon, and Aaron
  Courville. 2022.
\newblock \href {https://proceedings.mlr.press/v162/nikishin22a.html} {The
  primacy bias in deep reinforcement learning}.
\newblock In \emph{Proceedings of the 39th International Conference on Machine
  Learning ({ICML})}, volume 162 of \emph{Proceedings of Machine Learning
  Research}, pages 16828--16847. PMLR.

\bibitem[{Noukhovitch et~al.(2025)Noukhovitch, Huang, Xhonneux, Hosseini,
  Agarwal, and Courville}]{async-rlhf}
Michael Noukhovitch, Shengyi Huang, Sophie Xhonneux, Arian Hosseini, Rishabh
  Agarwal, and Aaron Courville. 2025.
\newblock \href {https://openreview.net/forum?id=FhTAG591Ve} {Asynchronous
  {RLHF}: Faster and more efficient off-policy {RL} for language models}.
\newblock In \emph{The Thirteenth International Conference on Learning
  Representations ({ICLR})}.

\bibitem[{Novati and Koumoutsakos(2019)}]{refer}
Guido Novati and Petros Koumoutsakos. 2019.
\newblock \href {https://proceedings.mlr.press/v97/novati19a.html} {Remember
  and forget for experience replay}.
\newblock In \emph{Proceedings of the 36th International Conference on Machine
  Learning ({ICML})}, volume~97 of \emph{Proceedings of Machine Learning
  Research}, pages 4851--4860. {PMLR}.

\bibitem[{Razin et~al.(2024)Razin, Zhou, Saremi, Thilak, Bradley, Nakkiran,
  Susskind, and Littwin}]{razin2024}
Noam Razin, Hattie Zhou, Omid Saremi, Vimal Thilak, Arwen Bradley, Preetum
  Nakkiran, Joshua~M. Susskind, and Etai Littwin. 2024.
\newblock \href {https://openreview.net/forum?id=IcVNBR7qZi} {Vanishing
  gradients in reinforcement finetuning of language models}.
\newblock In \emph{The Twelfth International Conference on Learning
  Representations ({ICLR})}.

\bibitem[{Rolnick et~al.(2019)Rolnick, Ahuja, Schwarz, Lillicrap, and
  Wayne}]{clear}
David Rolnick, Arun Ahuja, Jonathan Schwarz, Timothy~P. Lillicrap, and Greg
  Wayne. 2019.
\newblock \href
  {https://proceedings.neurips.cc/paper/2019/hash/fa7cdfad1a5aaf8370ebeda47a1ff1c3-Abstract.html}
  {Experience replay for continual learning}.
\newblock In \emph{Advances in Neural Information Processing Systems 32
  ({NeurIPS})}, Vancouver, Canada. Curran Associates, Inc.

\bibitem[{Schaul et~al.(2016)Schaul, Quan, Antonoglou, and Silver}]{per}
Tom Schaul, John Quan, Ioannis Antonoglou, and David Silver. 2016.
\newblock \href {https://arxiv.org/abs/1511.05952} {Prioritized experience
  replay}.
\newblock In \emph{International Conference on Learning Representations
  ({ICLR})}.

\bibitem[{Shao et~al.(2024)Shao, Wang, Zhu, Xu, Song, Bi, Zhang, Zhang, Li, Wu,
  and Guo}]{deepseekmath}
Zhihong Shao, Peiyi Wang, Qihao Zhu, Runxin Xu, Junxiao Song, Xiao Bi, Haowei
  Zhang, Mingchuan Zhang, Y.~K. Li, Y.~Wu, and Daya Guo. 2024.
\newblock \href {https://arxiv.org/abs/2402.03300} {{DeepSeekMath}: Pushing the
  limits of mathematical reasoning in open language models}.
\newblock \emph{arXiv preprint arXiv:2402.03300}.

\bibitem[{Sheng et~al.(2024)Sheng, Zhang, Ye, Wu, Zhang, Zhang, Peng, Lin, and
  Wu}]{verl}
Guangming Sheng, Chi Zhang, Zilingfeng Ye, Xibin Wu, Wang Zhang, Ru~Zhang,
  Yanghua Peng, Haibin Lin, and Chuan Wu. 2024.
\newblock \href {https://arxiv.org/abs/2409.19256} {{HybridFlow}: A flexible
  and efficient {RLHF} framework}.
\newblock \emph{arXiv preprint arXiv:2409.19256}.

\bibitem[{Shi et~al.(2025)Shi, Wu, Song, Zhou, and Zhao}]{adarft}
Taiwei Shi, Yiyang Wu, Linxin Song, Tianyi Zhou, and Jieyu Zhao. 2025.
\newblock \href {https://arxiv.org/abs/2504.05520} {Efficient reinforcement
  finetuning via adaptive curriculum learning}.
\newblock \emph{arXiv preprint arXiv:2504.05520}.

\bibitem[{Singh et~al.(2024)Singh, Co-Reyes, Agarwal, Anand, Patil, Garcia,
  Liu, Harrison, Lee, Xu, Parisi, Kumar, Alemi, Rizkowsky, Nova, Adlam, Bohnet,
  Elsayed, Sedghi, Mordatch, Simpson, Gur, Snoek, Pennington, Hron, Kenealy,
  Swersky, Mahajan, Culp, Xiao, Bileschi, Constant, Novak, Liu, Warkentin,
  Qian, Bansal, Dyer, Neyshabur, Sohl-Dickstein, and Fiedel}]{restem}
Avi Singh, John~D. Co-Reyes, Rishabh Agarwal, Ankesh Anand, Piyush Patil,
  Xavier Garcia, Peter~J. Liu, James Harrison, Jaehoon Lee, Kelvin Xu, Aaron
  Parisi, Abhishek Kumar, Alex Alemi, Alex Rizkowsky, Azade Nova, Ben Adlam,
  Bernd Bohnet, Gamaleldin Elsayed, Hanie Sedghi, and 22 others. 2024.
\newblock \href {https://openreview.net/forum?id=lNAyUngGFK} {Beyond human
  data: Scaling self-training for problem-solving with language models}.
\newblock \emph{Transactions on Machine Learning Research}.

\bibitem[{Sinha et~al.(2022)Sinha, Song, Garg, and Ermon}]{lfiw}
Samarth Sinha, Jiaming Song, Animesh Garg, and Stefano Ermon. 2022.
\newblock \href {https://proceedings.mlr.press/v168/sinha22a.html} {Experience
  replay with likelihood-free importance weights}.
\newblock In \emph{Proceedings of the 4th Annual Learning for Dynamics and
  Control Conference ({L4DC})}, volume 168 of \emph{Proceedings of Machine
  Learning Research}, pages 110--123. PMLR.

\bibitem[{Sun et~al.(2025)Sun, Shen, Wang, Chen, Wang, Zhou, and Zhang}]{dots}
Yifan Sun, Jingyan Shen, Yibin Wang, Tianyu Chen, Zhendong Wang, Mingyuan Zhou,
  and Huan Zhang. 2025.
\newblock \href {https://openreview.net/forum?id=uwUkETPIJN} {Improving data
  efficiency for {LLM} reinforcement fine-tuning through difficulty-targeted
  online data selection and rollout replay}.
\newblock In \emph{Advances in Neural Information Processing Systems
  ({NeurIPS})}.

\bibitem[{van Hasselt et~al.(2018)van Hasselt, Doron, Strub, Hessel, Sonnerat,
  and Modayil}]{deadly-triad}
Hado van Hasselt, Yotam Doron, Florian Strub, Matteo Hessel, Nicolas Sonnerat,
  and Joseph Modayil. 2018.
\newblock \href {https://arxiv.org/abs/1812.02648} {Deep reinforcement learning
  and the deadly triad}.
\newblock \emph{arXiv preprint arXiv:1812.02648}.

\bibitem[{Wan et~al.(2026)Wan, Wang, Huang, and Sun}]{buffer-matters}
Xu~Wan, Yansheng Wang, Wenqi Huang, and Mingyang Sun. 2026.
\newblock \href {https://arxiv.org/abs/2602.20722} {Buffer matters: Unleashing
  the power of off-policy reinforcement learning in large language model
  reasoning}.
\newblock \emph{arXiv preprint arXiv:2602.20722}.
\newblock Introduces ``Batch Adaptation Policy Optimization (BAPO)''; the
  {BAPO} acronym here is distinct from the {BAPO} (Balanced Policy Optimization
  with Adaptive Clipping) of \citet{bapo-clipping}.

\bibitem[{Wang et~al.(2025{\natexlab{a}})Wang, Wei, Zhang, Shao, Dan, Huang,
  Zhang, and Wang}]{efframe}
Chen Wang, Lai Wei, Yanzhi Zhang, Chenyang Shao, Zedong Dan, Weiran Huang,
  Yuzhi Zhang, and Yue Wang. 2025{\natexlab{a}}.
\newblock \href {https://arxiv.org/abs/2506.22200} {{EFRame}: Deeper reasoning
  via exploration-filter-replay reinforcement learning framework}.
\newblock \emph{arXiv preprint arXiv:2506.22200}.

\bibitem[{Wang et~al.(2025{\natexlab{b}})Wang, Cui, Li, Wan, and Zhao}]{dump}
Zhenting Wang, Guofeng Cui, Yu-Jhe Li, Kun Wan, and Wentian Zhao.
  2025{\natexlab{b}}.
\newblock \href {https://arxiv.org/abs/2504.09710} {{DUMP}: Automated
  distribution-level curriculum learning for {RL}-based {LLM} post-training}.
\newblock \emph{arXiv preprint arXiv:2504.09710}.

\bibitem[{Wu et~al.(2025)Wu, Wang, Tang, Ding, Helenowski, Tan, Xu, Gowda,
  Chen, Zhu, Tang, Qian, Zhu, and Hou}]{llamarl}
Bo~Wu, Sid Wang, Yunhao Tang, Jia Ding, Eryk Helenowski, Liang Tan, Tengyu Xu,
  Tushar Gowda, Zhengxing Chen, Chen Zhu, Xiaocheng Tang, Yundi Qian, Beibei
  Zhu, and Rui Hou. 2025.
\newblock \href {https://arxiv.org/abs/2505.24034} {{LlamaRL}: A distributed
  asynchronous reinforcement learning framework for efficient large-scale {LLM}
  training}.
\newblock \emph{arXiv preprint arXiv:2505.24034}.

\bibitem[{Xi et~al.(2025)Xi, Guo, Nan, Zhou, Shen, Chen, Liu, Huang, Zhang,
  Guo, Deng, Lei, Zheng, Wang, Zhang, Sun, Zheng, Yan, Gui, Zhang, and
  Huang}]{bapo-clipping}
Zhiheng Xi, Xin Guo, Yang Nan, Enyu Zhou, Junrui Shen, Wenxiang Chen, Jiaqi
  Liu, Jixuan Huang, Zhihao Zhang, Honglin Guo, Xun Deng, Zhikai Lei, Miao
  Zheng, Guoteng Wang, Shuo Zhang, Peng Sun, Rui Zheng, Hang Yan, Tao Gui, and
  2 others. 2025.
\newblock \href {https://arxiv.org/abs/2510.18927} {{BAPO}: Stabilizing
  off-policy reinforcement learning for {LLMs} via balanced policy optimization
  with adaptive clipping}.
\newblock \emph{arXiv preprint arXiv:2510.18927}.

\bibitem[{Xiong et~al.(2025{\natexlab{a}})Xiong, Yao, Xu, Pang, Wang, Sahoo,
  Li, Jiang, Zhang, Xiong, and Dong}]{raftrej}
Wei Xiong, Jiarui Yao, Yuhui Xu, Bo~Pang, Lei Wang, Doyen Sahoo, Junnan Li, Nan
  Jiang, Tong Zhang, Caiming Xiong, and Hanze Dong. 2025{\natexlab{a}}.
\newblock \href {https://arxiv.org/abs/2504.11343} {A minimalist approach to
  {LLM} reasoning: from rejection sampling to reinforce}.
\newblock \emph{arXiv preprint arXiv:2504.11343}.

\bibitem[{Xiong et~al.(2025{\natexlab{b}})Xiong, Ye, Liao, Dong, Xu, Monz,
  Bian, Jiang, and Zhang}]{reinforce-ada}
Wei Xiong, Chenlu Ye, Baohao Liao, Hanze Dong, Xinxing Xu, Christof Monz, Jiang
  Bian, Nan Jiang, and Tong Zhang. 2025{\natexlab{b}}.
\newblock \href {https://arxiv.org/abs/2510.04996} {{Reinforce-Ada}: An
  adaptive sampling framework under non-linear {RL} objectives}.
\newblock \emph{arXiv preprint arXiv:2510.04996}.

\bibitem[{Xu et~al.(2025)Xu, Savani, Fang, and Kolter}]{pods}
Yixuan~Even Xu, Yash Savani, Fei Fang, and J.~Zico Kolter. 2025.
\newblock \href {https://arxiv.org/abs/2504.13818} {Not all rollouts are
  useful: Down-sampling rollouts in {LLM} reinforcement learning}.
\newblock \emph{arXiv preprint arXiv:2504.13818}.

\bibitem[{Yan et~al.(2025)Yan, Li, Hu, Wang, Cui, Qu, Cheng, and Zhang}]{luffy}
Jianhao Yan, Yafu Li, Zican Hu, Zhi Wang, Ganqu Cui, Xiaoye Qu, Yu~Cheng, and
  Yue Zhang. 2025.
\newblock \href {https://openreview.net/forum?id=vO8LLoNWWk} {Learning to
  reason under off-policy guidance}.
\newblock In \emph{Advances in Neural Information Processing Systems
  ({NeurIPS})}.

\bibitem[{Yang et~al.(2025)Yang, Li, Yang, Zhang, Hui, Zheng, Yu, Gao, Huang,
  Lv, Zheng, Liu, Zhou, Huang, Hu, Ge, Wei, Lin, Tang, Yang, Tu, Zhang, Yang,
  Yang, Zhou, Zhou, Lin, Dang, Bao, Yang, Yu, Deng, Li, Xue, Li, Zhang, Wang,
  Zhu, Men, Gao, Liu, Luo, Li, Tang, Yin, Ren, Wang, Zhang, Ren, Fan, Su,
  Zhang, Zhang, Wan, Liu, Wang, Cui, Zhang, Zhou, and Qiu}]{qwen3}
An~Yang, Anfeng Li, Baosong Yang, Beichen Zhang, Binyuan Hui, Bo~Zheng, Bowen
  Yu, Chang Gao, Chengen Huang, Chenxu Lv, Chujie Zheng, Dayiheng Liu, Fan
  Zhou, Fei Huang, Feng Hu, Hao Ge, Haoran Wei, Huan Lin, Jialong Tang, and 41
  others. 2025.
\newblock \href {https://arxiv.org/abs/2505.09388} {{Qwen3} technical report}.
\newblock \emph{arXiv preprint arXiv:2505.09388}.

\bibitem[{Yao et~al.(2025)Yao, Hao, Zhang, Dong, Xiong, Jiang, and
  Zhang}]{gvmraft}
Jiarui Yao, Yifan Hao, Hanning Zhang, Hanze Dong, Wei Xiong, Nan Jiang, and
  Tong Zhang. 2025.
\newblock \href {https://arxiv.org/abs/2505.02391} {Optimizing chain-of-thought
  reasoners via gradient variance minimization in rejection sampling and {RL}}.
\newblock \emph{arXiv preprint arXiv:2505.02391}.

\bibitem[{Yu et~al.(2025)Yu, Zhang, Zhu, Yuan, Zuo, Yue, Dai, Fan, Liu, Liu,
  Liu, Liu, Lin, Lin, Ma, Sheng, Tong, Zhang, Zhang, Zhang, Zhang, Zhu, Zhu,
  Chen, Chen, Wang, Yu, Song, Wei, Zhou, Liu, Ma, Zhang, Yan, Wu, and
  Wang}]{dapo}
Qiying Yu, Zheng Zhang, Ruofei Zhu, Yufeng Yuan, Xiaochen Zuo, Yu~Yue, Weinan
  Dai, Tiantian Fan, Gaohong Liu, Juncai Liu, Lingjun Liu, Xin Liu, Haibin Lin,
  Zhiqi Lin, Bole Ma, Guangming Sheng, Yuxuan Tong, Chi Zhang, Mofan Zhang, and
  17 others. 2025.
\newblock \href
  {https://proceedings.neurips.cc/paper_files/paper/2025/hash/a4277440d50f1f15d2cb4c14f7e0c0d2-Abstract-Conference.html}
  {{DAPO}: An open-source {LLM} reinforcement learning system at scale}.
\newblock In \emph{Advances in Neural Information Processing Systems 38
  ({NeurIPS} 2025)}.

\bibitem[{Zhan et~al.(2026)Zhan, Li, Wang, Qu, Liu, Shao, Wong, and
  Cheng}]{exgrpo}
Runzhe Zhan, Yafu Li, Zhi Wang, Xiaoye Qu, Dongrui Liu, Jing Shao, Derek~F.
  Wong, and Yu~Cheng. 2026.
\newblock \href {https://arxiv.org/abs/2510.02245} {{ExGRPO}: Learning to
  reason from experience}.
\newblock In \emph{Proceedings of the International Conference on Learning
  Representations ({ICLR})}.

\bibitem[{Zhang et~al.(2025{\natexlab{a}})Zhang, Fu, Zhang, Fu, Wang, Zhang,
  and Zhou}]{rlep}
Hongzhi Zhang, Jia Fu, Jingyuan Zhang, Kai Fu, Qi~Wang, Fuzheng Zhang, and
  Guorui Zhou. 2025{\natexlab{a}}.
\newblock \href {https://arxiv.org/abs/2507.07451} {{RLEP}: Reinforcement
  learning with experience replay for {LLM} reasoning}.
\newblock \emph{arXiv preprint arXiv:2507.07451}.

\bibitem[{Zhang et~al.(2025{\natexlab{b}})Zhang, Arora, Mei, and
  Zanette}]{speedrl}
Ruiqi Zhang, Daman Arora, Song Mei, and Andrea Zanette. 2025{\natexlab{b}}.
\newblock \href {https://arxiv.org/abs/2506.09016} {{SPEED-RL}: Faster training
  of reasoning models via online curriculum learning}.
\newblock \emph{arXiv preprint arXiv:2506.09016}.

\bibitem[{Zhang and Sutton(2017)}]{cer}
Shangtong Zhang and Richard~S. Sutton. 2017.
\newblock \href {https://arxiv.org/abs/1712.01275} {A deeper look at experience
  replay}.
\newblock \emph{arXiv preprint arXiv:1712.01275}.

\bibitem[{Zhang et~al.(2025{\natexlab{c}})Zhang, Yao, Yu, Liu, Yin, Yin, Yun,
  and Li}]{ar3po}
Yuheng Zhang, Wenlin Yao, Changlong Yu, Yao Liu, Qingyu Yin, Bing Yin, Hyokun
  Yun, and Lihong Li. 2025{\natexlab{c}}.
\newblock \href {https://arxiv.org/abs/2509.25808} {Improving sampling
  efficiency in {RLVR} through adaptive rollout and response reuse}.
\newblock \emph{arXiv preprint arXiv:2509.25808}.

\bibitem[{Zhao and Tresp(2018)}]{ebp}
Rui Zhao and Volker Tresp. 2018.
\newblock \href {https://proceedings.mlr.press/v87/zhao18a.html} {Energy-based
  hindsight experience prioritization}.
\newblock In \emph{Proceedings of the 2nd Conference on Robot Learning
  ({CoRL})}, volume~87 of \emph{Proceedings of Machine Learning Research},
  pages 113--122. {PMLR}.

\bibitem[{Zheng et~al.(2025{\natexlab{a}})Zheng, Zhao, and Chen}]{m2po}
Haizhong Zheng, Jiawei Zhao, and Beidi Chen. 2025{\natexlab{a}}.
\newblock \href {https://arxiv.org/abs/2510.01161} {Prosperity before collapse:
  How far can off-policy {RL} reach with stale data on {LLMs}?}
\newblock \emph{arXiv preprint arXiv:2510.01161}.

\bibitem[{Zheng et~al.(2025{\natexlab{b}})Zheng, Zhou, Bartoldson, Kailkhura,
  Lai, Zhao, and Chen}]{greso}
Haizhong Zheng, Yang Zhou, Brian~R. Bartoldson, Bhavya Kailkhura, Fan Lai,
  Jiawei Zhao, and Beidi Chen. 2025{\natexlab{b}}.
\newblock \href
  {https://proceedings.neurips.cc/paper_files/paper/2025/hash/b42fb82fc3ffb88b872c4714f093875d-Abstract-Conference.html}
  {Act only when it pays: Efficient reinforcement learning for {LLM} reasoning
  via selective rollouts}.
\newblock In \emph{Advances in Neural Information Processing Systems},
  volume~38, pages 124321--124346. Curran Associates, Inc.

\bibitem[{Zhong et~al.(2025)Zhong, Zhang, Song, Hu, Jin, Wu, Chen, Chen, Zhou,
  Wan, Zhou, Jiang, Zhu, and Jiang}]{streamrl}
Yinmin Zhong, Zili Zhang, Xiaoniu Song, Hanpeng Hu, Chao Jin, Bingyang Wu, Nuo
  Chen, Yukun Chen, Yu~Zhou, Changyi Wan, Hongyu Zhou, Yimin Jiang, Yibo Zhu,
  and Daxin Jiang. 2025.
\newblock \href {https://arxiv.org/abs/2504.15930} {{StreamRL}: Scalable,
  heterogeneous, and elastic {RL} for {LLMs} with disaggregated stream
  generation}.
\newblock \emph{arXiv preprint arXiv:2504.15930}.

\bibitem[{Zhu et~al.(2026{\natexlab{a}})Zhu, Ren, Li, Lin, Yang, Liu, Zhen,
  Liu, and Zhang}]{dppo-prune}
Haodong Zhu, Yangyang Ren, Yanjing Li, Mingbao Lin, Linlin Yang, Xuhui Liu,
  Xiantong Zhen, Haiguang Liu, and Baochang Zhang. 2026{\natexlab{a}}.
\newblock \href {https://arxiv.org/abs/2603.04135} {Unbiased dynamic pruning
  for efficient group-based policy optimization}.
\newblock \emph{arXiv preprint arXiv:2603.04135}.

\bibitem[{Zhu et~al.(2026{\natexlab{b}})Zhu, He, Hou, Zhang, Zeng, Peng, Fang,
  and Yu}]{pspo}
Xinxin Zhu, Ying He, Haowen Hou, Ruichong Zhang, Nianbo Zeng, Yulin Peng,
  Jiongfeng Fang, and F.~Richard Yu. 2026{\natexlab{b}}.
\newblock \href {https://doi.org/10.1609/aaai.v40i34.40157} {{PSPO}:
  Prompt-level prioritization and experience-weighted smoothing for efficient
  policy optimization}.
\newblock In \emph{Proceedings of the {AAAI} Conference on Artificial
  Intelligence}, volume~40, pages 29186--29194. {AAAI} Press.

\end{thebibliography}

\clearpage
\appendix
\section{Full Experimental Details}
\label{app:training-setup}

\paragraph{Artifacts and Licenses}
We use only publicly released artifacts.
The base models are the Qwen3-Base family~\citep{qwen3} at 0.6B, 1.7B, and 4B parameters, released under the Apache 2.0 license.
The training prompts are from DeepScaleR-Preview~\citep{deepscaler}, released under the MIT license.
Evaluation uses MATH-500~\citep{lightman2023lets} along with AIME25~\citep{aime2025}, AIME26~\citep{aime2026}, HMMT-F25~\citep{hmmt_feb2025}, and HMMT-F26~\citep{hmmt_feb2026}.
The MATH-500 source problems are MIT-licensed via the upstream MATH benchmark and the PRM800K repository.
The AIME and HMMT problems are publicly released by the Mathematical Association of America and the Harvard-MIT Mathematics Tournament, respectively.
All prompts and responses are in English.
None of these resources contain personal information about individuals.
Our use is consistent with the research-use terms of the original releases.
We release no new dataset or pretrained model artifact with this paper.

\paragraph{Benchmarks and Decoding}
We evaluate on the five math-reasoning benchmarks listed in the Artifacts and Licenses paragraph above.
All five are evaluation-only.
The sizes are $500$ problems for MATH-500, $30$ for AIME25, $30$ for AIME26, $30$ for HMMT-F25, and $33$ for HMMT-F26.
AIME and HMMT are reported per year, comparing AIME25 against AIME26 and HMMT-F25 against HMMT-F26 to show per-benchmark variance.
For each prompt we draw samples at temperature $0.6$ and top-$p$ at $0.8$.
MATH-500 is evaluated at $k = 8$.
AIME and HMMT are evaluated at $k = 16$ as two disjoint waves of $8$.
Per benchmark we report AVG@$K$ for mean accuracy and PASS@$K$ for any-correct rate.
The ablation tables from Table~\ref{tab:age-clipping} to Table~\ref{tab:priority-signal} report a five-benchmark aggregate.
This aggregate is the unweighted mean of MATH-500, AIME25, AIME26, HMMT-F25, and HMMT-F26.
The main result in Table~\ref{tab:cross-scale-main} shows the per-benchmark cells directly.
The $k$-wave construction and the full seed policy are detailed in the Training and Sampling Seeds paragraph below.

\paragraph{Training Set and Hyperparameters}
The training set is DeepScaleR-Preview~\citep{deepscaler}, filtered to prompts of at most $1024$ tokens, leaving $40{,}306$ prompts.
Table~\ref{tab:training-setup-full} lists the RL training hyperparameters used across all conditions.
The per-scale row reports the value that varies by model size.
The tabulated results are all at global step~200, which is fixed across conditions to equalize the rollout budget.
GRPO uses a group size of $G = 8$ rollouts per prompt at every scale.
This is the per-prompt sampling count behind the $\{A_i\}$ computation in Section~\ref{sec:prelim} and the $G$-rollout group in Algorithm~\ref{alg:ours}.

\begin{table}[!t]
\centering\footnotesize
\setlength{\tabcolsep}{8pt}
\resizebox{\ifdim\width>\columnwidth \columnwidth\else \width\fi}{!}{%
\begin{tabular}{l l}
\toprule
Setting & Value \\
\midrule
\multicolumn{2}{l}{\emph{Common across scales}} \\
Train batch size                    & 256 \\
Group size $G$                      & 8 \\
Max prompt length                   & 1024 tokens \\
Max response length                 & 8192 tokens \\
Learning rate                       & $1 \times 10^{-6}$ \\
PPO mini-batch size                 & 128 \\
Clip $\epsilon_{\text{low}}$, $\epsilon_{\text{high}}$, $c_{\text{dual}}$ & 0.2, 0.28, 10.0 \\
Advantage stabilizer $\epsilon_\sigma$ (Eq.~\ref{eq:grpo-adv}) & $10^{-6}$ \\
Optimizer                           & AdamW ($\beta_1{=}0.9$, $\beta_2{=}0.999$) \\
Weight decay                        & 0.01 \\
Gradient clip                       & 1.0 (L2 norm) \\
LR warmup                           & none \\
KL penalty $\beta$                  & 0 (\texttt{kl\_coef=0}, \texttt{use\_kl\_loss=False}) \\
\midrule
\multicolumn{2}{l}{\emph{Per-scale at 0.6B, 1.7B, and 4B}} \\
PPO micro-batch per GPU             & 4, 4, 2 \\
\bottomrule
\end{tabular}%
}
\caption{RL training hyperparameters.
         The shared block applies to all conditions and the per-scale row varies across 0.6B, 1.7B, and 4B.
         $\epsilon_{\text{low}}$ and $\epsilon_{\text{high}}$ are the DAPO clip-higher window from Section~\ref{sec:prelim}.
         $c_{\text{dual}}$ is the verl dual-clip bound on negative-advantage ratios, and the KL term is fully disabled.}
\label{tab:training-setup-full}
\end{table}

\paragraph{Compute}
All training runs use NVIDIA H100 80GB SXM5 GPUs in a single-node configuration.
The 0.6B and 1.7B runs use four H100s per run, and the 4B runs use eight.
A single run to step~200 takes approximately $10$, $7$, and $14$ wall-clock hours at 0.6B, 1.7B, and 4B.
These correspond to roughly $40$, $28$, and $112$ H100-hours per run.
The 0.6B wall-clock exceeds the 1.7B wall-clock because the 0.6B rollouts are on average substantially longer than the 1.7B rollouts.
The mean rollout lengths at step~200 are approximately $2{,}086$ and $1{,}191$ tokens respectively, as reported in Table~\ref{tab:training-length-cross-scale}.
Rollout generation dominates per-step time at fixed GPU count.
The full sweep reported in the paper covers our method, the naive baselines, and the ablations across the three scales, for $29$ distinct configurations.
It consumes roughly $1{,}200$ H100-hours to step~200, or about $2{,}500$ H100-hours including extended-training runs and the roughly $30\%$ overhead from failed or restarted jobs.
All runs use bfloat16 mixed precision and gradient checkpointing, with a rollout-side GPU memory utilization of $0.6$.

\paragraph{Software}
Training uses the verl framework~\citep{verl} at v0.7.0.dev0, extended with custom modifications for the replay buffer described in this paper.
Rollout generation during GRPO updates uses vLLM~\citep{vllm} at v0.11.0 with bfloat16 precision, tensor-parallel size $1$, a maximum model length of $9{,}216$ tokens, and chunked prefill.
The training-side runtime is PyTorch v2.8.0 on CUDA 12.8 with FlashAttention v2.8.1, FlashInfer v0.3.1, Hugging Face transformers v4.57.6, and Ray v2.51.1 for distributed orchestration.
Evaluation uses vLLM v0.14.1 with Hugging Face transformers v5.0.0.
The evaluation maximum response length is $8{,}192$ tokens at 0.6B and 1.7B and $16{,}384$ tokens at 4B.
Sampling settings are described in the Benchmarks and Decoding paragraph above.
Each prompt is rendered as a two-message chat with system content ``You are a helpful assistant.'' and user content ``Problem~: \{question\}\textbackslash n\textbackslash nPlease reason step by step, and put your final answer within \texttt{\textbackslash boxed\{\}}.''.
The chat is applied via \texttt{tokenizer.apply\_chat\_template(\ldots, add\_generation\_prompt=True)}.
All scales use the tokenizer of the corresponding Qwen3-Base checkpoint.

\paragraph{Verifier}
Answers are extracted from the final \texttt{\textbackslash boxed\{\ldots\}} expression within the trailing $500$ characters of the completion.
This truncation bounds verifier latency without changing correctness when the final boxed answer is near the end of the response.
Training-time reward verification uses a Minerva-style code-based comparator adapted from lm-evaluation-harness~\citep{lmevalharness}.
The extracted answer is normalized through LaTeX cleanup, unit-word removal, fraction and square-root shorthand expansion, and digit-comma stripping.
The normalized answer is then compared by exact string match to the similarly normalized ground truth, yielding a reward of $\pm 1$.
Final evaluation grading uses math-verify~\citep{mathverify} at v0.9.0 with the default extractor and grader.
Both the prediction and the ground truth are parsed with \texttt{math\_verify.parse}, wrapping the operand in a LaTeX math environment if not already inside one.
The answer is accepted when \texttt{math\_verify.verify(gt, pred)} returns true.
If either side fails to parse, we fall back to raw string equality.

\paragraph{Training and Sampling Seeds}
The cross-scale main comparison of Table~\ref{tab:cross-scale-main} is run three times per condition, with training seeds $42$, $101$, and $202$.
Its per-benchmark cells are the mean over those three runs, and its avg and pass columns add the standard deviation across them.
The AES and length analyses of Section~\ref{sec:dyn-aes} and the matched-FLOPs curves of Section~\ref{sec:dyn-compute} use the same three runs.
The ablations use training seed $42$ alone.
Evaluation sampling uses one primary $k = 8$ wave at seed $42$ for every benchmark.
Seed $42$ is the legacy default of our evaluation harness.
This wave is the sole source of MATH-500's $k = 8$ result.
For AIME and HMMT we additionally draw a second $k = 8$ wave at seed $43$ and concatenate the two waves to obtain the reported $k = 16$ statistics.
The two seeds yield disjoint sample\_id ranges $0$ through $7$ and $8$ through $15$ that do not overlap.
The same two seeds are used for every condition at every scale.
The conditions are GRPO, the naive baselines, and our method.
Any seed-dependent fluctuation in the per-example samples is therefore held constant across the comparisons in Tables~\ref{tab:cross-scale-main} through~\ref{tab:aes-base}.
This is not a source of differential bias between methods.

\section{Effective Batch Size After Zero-Variance Filtering}
\label{app:effective-batch}

Table~\ref{tab:effective-batch} reports the per-step surviving fresh-rollout count after zero-variance filtering for the GRPO baseline at the three Qwen3-Base scales.
The count is averaged over the first $200$ training steps, which is the snapshot point used throughout the paper.
Each step issues the same fresh-rollout budget $B_{\text{fresh}} = 2048$, which is the train batch of $256$ prompts at $G = 8$ rollouts each and matches the $B_{\text{fresh}}$ symbol in Algorithm~\ref{alg:ours}.
The fraction surviving the zero-variance filter varies by scale.
The surviving rollouts are those whose group has mixed correct and incorrect rewards.
Just over three-fifths of the smallest model's batch falls in all-wrong groups, leaving a mixed-reward fraction of approximately $34\%$ at 0.6B.
The larger models hold $44\%$ to $49\%$ mixed-reward groups.
The dropped fraction shifts toward all-correct groups as model scale increases.

A capacity-only buffer-eviction rule would therefore induce a scale-dependent staleness profile.
Larger models fill the buffer faster and evict sooner, while smaller models do the opposite.
Age eviction with horizon $\tau_{\max}$, as described in Section~\ref{sec:method-buffer}, removes this dependence by bounding worst-case rollout age independently of buffer fullness.

\begin{table}[!t]
\centering\footnotesize
\setlength{\tabcolsep}{4pt}
\resizebox{\ifdim\width>\columnwidth \columnwidth\else \width\fi}{!}{%
\begin{tabular}{l c c c c}
\toprule
Scale & $B_{\text{fresh}}$ & $B_{\text{fresh}}'$ & All-pos (\%) & All-neg (\%) \\
\midrule
Qwen3-0.6B & 2048 &  692.0 &  4.75 & 61.48 \\
Qwen3-1.7B & 2048 & 901.6 &  8.57 & 47.42 \\
Qwen3-4B   & 2048 & 999.2 & 21.09 & 30.10 \\
\bottomrule
\end{tabular}%
}
\caption{Effective batch size after zero-variance filtering for the GRPO baseline at three Qwen3-Base scales, averaged over training steps $0$ to $200$.
$B_{\text{fresh}}$ is the pre-filter rollout count from Algorithm~\ref{alg:ours}.
$B_{\text{fresh}}'$ is the mean surviving count belonging to mixed-reward groups.
``All-pos'' and ``All-neg'' are the percentages of groups filtered out for being all-correct and all-wrong.}
\label{tab:effective-batch}
\end{table}

\section{Replay-Ratio and Priority-Strength Sweep}
\label{app:ratio-alpha-full}

Table~\ref{tab:ratio-alpha-sweep} sweeps the replay ratio $r$ and priority-strength exponent $\alpha$ at Qwen3-1.7B-Base.
Our setting at $r = 0.5$ and $\alpha = 0.5$ leads on both metrics.
It is essentially tied for best on AVG@$K$, within $0.05$~pp of the next-best $r=1.0, \alpha=1.0$, and has a larger margin of $+1.68$~pp on PASS@$K$ over the next-best.
We adopt the setting on the PASS@$K$ margin.
PASS@$K$ rewards solving a problem on at least one of $k$ samples, which is the axis our rollout-level $|A_i|$ priority targets by recycling the within-group minority rollouts that fresh batches see sparsely (Section~\ref{sec:method-priority}).

\begin{table}[!t]
\centering\footnotesize
\setlength{\tabcolsep}{6pt}
\resizebox{0.8\columnwidth}{!}{%
\begin{tabular}{l cc}
\toprule
Config & avg & pass \\
\midrule
GRPO (no replay) & 15.75 & 27.64 \\
\midrule
Ours at $r = 0.5$, $\alpha = 0.5$ & \textbf{16.58}    & \textbf{32.35} \\
$r = 0.5$, $\alpha = 1.0$                 & 15.96             & 27.39 \\
$r = 1.0$, $\alpha = 0.5$                 & 15.99             & 27.99 \\
$r = 1.0$, $\alpha = 1.0$                 & 16.53 & 30.67 \\
\bottomrule
\end{tabular}%
}
\caption{Replay-ratio and priority-strength sweep at Qwen3-1.7B-Base.
         avg and pass are five-benchmark aggregates under Table~\ref{tab:cross-scale-main}'s $k$ convention.
         \textbf{Bold} marks the best.}
\label{tab:ratio-alpha-sweep}
\end{table}

\section{AES References and the Length Decomposition}
\label{app:aes-references}

Table~\ref{tab:aes-references} lists the reference values used in the AES computation of Table~\ref{tab:aes-base}.
Table~\ref{tab:length-references} gives the two correctness-conditioned sides whose ratio Table~\ref{tab:length-quality} reports in Section~\ref{sec:dyn-aes}.
Both tables share the benchmark set, the $k$ convention, and the decoding settings of the post-training evaluation in Section~\ref{sec:exp-setup}.
For AES, the Base init values are the untrained Qwen3-Base checkpoints, and the GRPO values are the no-replay GRPO baseline at step~200.

\begin{table}[!t]
\centering\footnotesize
\setlength{\tabcolsep}{5pt}
\resizebox{\ifdim\width>\columnwidth \columnwidth\else \width\fi}{!}{%
\begin{tabular}{l cc cc cc}
\toprule
\multirow{2}{*}{Reference}
 & \multicolumn{2}{c}{Qwen3-0.6B}
 & \multicolumn{2}{c}{Qwen3-1.7B}
 & \multicolumn{2}{c}{Qwen3-4B} \\
\cmidrule(lr){2-3}\cmidrule(lr){4-5}\cmidrule(lr){6-7}
 & Acc (\%) & $L$ & Acc (\%) & $L$ & Acc (\%) & $L$ \\
\midrule
Base init & $9.52$ & $2162$ & $13.53$ & $2007$ & $17.36$ & $2452$ \\
GRPO      & $11.18$ & $3609$ & $15.86$ & $2090$ & $27.77$ & $7031$ \\
\bottomrule
\end{tabular}%
}
\caption{AES reference values reporting five-benchmark aggregate accuracy in percent and per-rollout response length in Qwen3 tokens.
         These values serve as $A_{\text{ref}}$ and $L_{\text{ref}}$ in Table~\ref{tab:aes-base}.}
\label{tab:aes-references}
\end{table}

\begin{table}[!t]
\centering\footnotesize
\setlength{\tabcolsep}{5pt}
\resizebox{\ifdim\width>\columnwidth \columnwidth\else \width\fi}{!}{%
\begin{tabular}{l cc cc}
\toprule
\multirow{2}{*}{Scale} & \multicolumn{2}{c}{GRPO} & \multicolumn{2}{c}{Ours} \\
\cmidrule(lr){2-3}\cmidrule(lr){4-5}
 & $\mu(\text{len}\mid\checkmark)$ & $\mu(\text{len}\mid\times)$ & $\mu(\text{len}\mid\checkmark)$ & $\mu(\text{len}\mid\times)$ \\
\midrule
Qwen3-0.6B & $526$ & $3820$ & $661$ & $3631$ \\
Qwen3-1.7B & $507$ & $2262$ & $555$ & $2009$ \\
Qwen3-4B & $1047$ & $8844$ & $1110$ & $7546$ \\
\bottomrule
\end{tabular}%
}
\caption{Mean response length in tokens conditioned on correctness, pooled across the five-benchmark set.
         These are the two sides whose ratio Table~\ref{tab:length-quality} reports.
         That table forms the ratio within each run before averaging, so dividing the entries here does not reproduce its columns exactly.}
\label{tab:length-references}
\end{table}

\section{Rollout-Level $|A_i|$ vs.\ Query-Level $\sigma_g$ in Binary RLVR}
\label{app:binary-advantage-sigma}

We compare the two priority signals that recur throughout the paper.
These are the rollout-level absolute advantage $|A_i|$ and the query-level standard deviation $\sigma_g$.
The comparison is in the regime of binary verifiable rewards $r_i \in \{-1, +1\}$ that our experiments use.
Both quantities admit closed forms parameterized by the group's correct count $k$.
We drop the stabilizer $\epsilon_\sigma$ of Equation~\ref{eq:grpo-adv}, which is smaller than $\sigma_g$ by six orders of magnitude for any mixed group.

\begin{proposition}[Closed-form $|A_i|$ and $\sigma_g$ in binary RLVR]
\label{prop:minority}
Consider a group of $G$ rollouts under a binary verifiable reward
$r_i \in \{-1, +1\}$ with correct count $k$ and $0 < k < G$. The
group mean and standard deviation are
\begin{align}
  \mu_g &\;=\; \tfrac{2k}{G} - 1,
    \label{eq:binary-mu} \\
  \sigma_g &\;=\; 2\sqrt{\tfrac{k}{G}\!\left(1 - \tfrac{k}{G}\right)},
    \label{eq:binary-sigma}
\end{align}
and the group-relative advantage
$A_i = (r_i - \mu_g)/\sigma_g$ takes exactly two values per group,
\begin{align}
  |A_i^{\,\text{correct}}|   &\;=\; \sqrt{\tfrac{G - k}{k}},
    \label{eq:abs-adv-correct} \\
  |A_i^{\,\text{incorrect}}| &\;=\; \sqrt{\tfrac{k}{G - k}}.
    \label{eq:abs-adv-incorrect}
\end{align}
For any mixed group with $0 < k < G$, the side with fewer rollouts carries strictly the larger $|A_i|$.
We call this side the \emph{minority}.
\end{proposition}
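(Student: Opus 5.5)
The proof is a direct computation, parameterized by the correct fraction $p = k/G \in (0,1)$. We use the population (divide-by-$G$) standard deviation, which is the convention implicit in Equation~\eqref{eq:binary-sigma}, and we drop $\epsilon_\sigma$ as stated before the proposition.

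First, compute $\mu_g$. The group has $k$ rewards equal to $+1$ and $G-k$ equal to $-1$, so
\[
\mu_g = \frac{k - (G-k)}{G} = 2p - 1 .
\]

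Second, compute $\sigma_g$. Since $r_i^2 = 1$ for every rollout,
\[
\sigma_g^2 = \frac{1}{G}\sum_i r_i^2 - \mu_g^2 = 1 - (2p-1)^2 = 4p(1-p),
\]
which gives Equation~\eqref{eq:binary-sigma}. This quantity is strictly positive exactly when $0<k<G$, so the division defining $A_i$ is legitimate.

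Third, compute the two advantage values.
\begin{itemize}
\item For a correct rollout, $r_i - \mu_g = 1-(2p-1) = 2(1-p)$. Dividing by $2\sqrt{p(1-p)}$ gives $\sqrt{(1-p)/p} = \sqrt{(G-k)/k}$, which is positive.
\item For an incorrect rollout, $r_i - \mu_g = -1-(2p-1) = -2p$. Dividing by the same $\sigma_g$ gives $-\sqrt{p/(1-p)} = -\sqrt{k/(G-k)}$.
\end{itemize}
Taking absolute values yields Equations~\eqref{eq:abs-adv-correct} and~\eqref{eq:abs-adv-incorrect}. Because every reward is one of two values, $A_i$ takes exactly these two values within the group.

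Finally, compare the two sides. The two magnitudes are reciprocals: writing $x = \sqrt{(G-k)/k} > 0$, we have $|A^{\text{correct}}| = x$ and $|A^{\text{incorrect}}| = 1/x$.
\begin{itemize}
\item If $k < G-k$, so the correct side is the minority, then $x > 1 > 1/x$ and the correct side carries the larger magnitude.
\item If $k > G-k$, then $x < 1$ and the incorrect (minority) side carries the larger magnitude.
\end{itemize}
The strict inequality holds whenever the two sides differ in size. When $k = G/2$ both magnitudes equal $1$ and there is no minority, so the claim is vacuous in that case; we note this tie explicitly as a caveat.

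There is no real obstacle. The only care needed is to fix the population-variance convention, since the sample ($G-1$) convention would rescale $\sigma_g$ and break the exact closed forms. This rescaling does not affect the minority ordering, which depends only on the ratio of the two magnitudes.
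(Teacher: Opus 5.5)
Your proposal is correct and follows the paper's proof essentially step for step. It uses the same moment computation $\sigma_g^2 = 1-\mu_g^2 = 4\tfrac{k}{G}(1-\tfrac{k}{G})$, the same substitution for the two advantage values, and a reciprocal comparison ($x$ versus $1/x$) equivalent to the paper's ratio $|A_i^{\,\text{correct}}|/|A_i^{\,\text{incorrect}}| = (G-k)/k$; your remarks on the population-variance convention and the vacuous tie at $k = G/2$ are accurate and simply make explicit what the paper leaves implicit.
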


\begin{proof}
With $r_i \in \{-1, +1\}$, the group statistics are
\begin{align*}
  \mu_g &\;=\; \tfrac{k\cdot(+1) + (G - k)\cdot(-1)}{G} \;=\; \tfrac{2k}{G} - 1, \\
  \sigma_g^2 &\;=\; \mathbb{E}[r^2] - \mu_g^2
              \;=\; 1 - \left(\tfrac{2k}{G} - 1\right)^2 \\
             &\;=\; 4\,\tfrac{k}{G}\!\left(1 - \tfrac{k}{G}\right),
\end{align*}
yielding Equations~\ref{eq:binary-mu} and~\ref{eq:binary-sigma}.
Substituting into $A_i = (r_i - \mu_g)/\sigma_g$,
\begin{align}
  A_i^{\,\text{correct}}
    &\;=\; \tfrac{(+1) - (2k/G - 1)}{\sigma_g} \notag \\
    &\;=\; \tfrac{2(G - k)/G}{2\sqrt{(k/G)(1 - k/G)}} \notag \\
    &\;=\; \sqrt{\tfrac{G - k}{k}},
    \label{eq:adv-correct} \\
  A_i^{\,\text{incorrect}}
    &\;=\; \tfrac{(-1) - (2k/G - 1)}{\sigma_g} \notag \\
    &\;=\; \tfrac{-2k/G}{2\sqrt{(k/G)(1 - k/G)}} \notag \\
    &\;=\; -\sqrt{\tfrac{k}{G - k}},
    \label{eq:adv-incorrect}
\end{align}
yielding the absolute values stated.
Their ratio is
\begin{equation}
\label{eq:minority-ratio}
  \tfrac{|A_i^{\,\text{correct}}|}{|A_i^{\,\text{incorrect}}|}
  \;=\; \tfrac{G - k}{k},
\end{equation}
which exceeds $1$ iff $k < G - k$, that is, iff the correct rollouts are the minority.
The smaller of $\{k, G - k\}$ therefore always indexes the side with the larger $|A_i|$.
\end{proof}

\paragraph{Implication for Priority Design}
$\sigma_g$ in Equation~\ref{eq:binary-sigma} depends only on the group's correct count $k$.
It is therefore a \emph{single} value shared by all $G$ rollouts of the same group.
Using $\sigma_g$ as a replay priority therefore promotes or demotes an entire group uniformly, with no within-group discrimination.
In contrast, $|A_i|$ from Equations~\ref{eq:abs-adv-correct} and~\ref{eq:abs-adv-incorrect} varies \emph{within} a group.
The minority side always carries strictly the larger $|A_i|$ by the ratio in Equation~\ref{eq:minority-ratio}.
The minority rollouts of skewed groups are the rare-correct or rare-incorrect examples.
They therefore receive a structurally higher priority under rollout-level $|A_i|$ than under query-level $\sigma_g$ sampling.
This is the formal basis for the rollout-level $|A_i|$ priority adopted in Section~\ref{sec:method-priority}.

\paragraph{Beyond Binary Rewards}
Proposition~\ref{prop:minority} relies on $r_i \in \{-1, +1\}$, or equivalently any two-valued encoding.
The closed forms in Equations~\ref{eq:abs-adv-correct} and~\ref{eq:abs-adv-incorrect} and the strict minority-side dominance no longer hold under continuous or partial-credit rewards.
The broader \emph{intuition} is that rollout-level $|A_i|$ tracks the rollouts carrying the most informative gradient signal.
This intuition is expected to extend to non-binary settings.
The within-group ranking property exploited here would have to be restated.

\section{Cross-Scale Training-Time Response Length}
\label{app:training-length-cross-scale}

We report the training-time response length of our method and the no-replay GRPO baseline over the first $200$ training steps at the three Qwen3-Base scales.
Figure~\ref{fig:training-length-cross-scale} plots the wandb-logged mean response length over training.
Table~\ref{tab:training-length-cross-scale} gives the step-$200$ endpoint of each curve along with the relative change of our method against the per-scale GRPO baseline.

\begin{figure*}[!t]
  \centering
  \includegraphics[width=\textwidth]{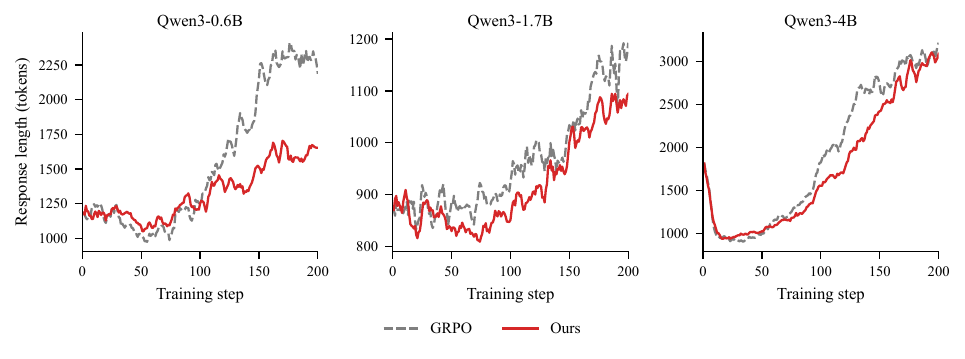}
  \caption{Training-time response length over the first $200$ steps for our method and the no-replay GRPO baseline at three Qwen3-Base scales.
           The curves are plotted as 5-step rolling means of \texttt{response\_length/mean}.
           The step-$200$ endpoint of each curve is the value tabulated in Table~\ref{tab:training-length-cross-scale}.}
  \label{fig:training-length-cross-scale}
\end{figure*}

\begin{table}[!t]
\centering\footnotesize
\setlength{\tabcolsep}{4pt}
\resizebox{\ifdim\width>\columnwidth \columnwidth\else \width\fi}{!}{%
\begin{tabular}{l c c c c c c}
\toprule
\multirow{2}{*}{Method}
 & \multicolumn{2}{c}{Qwen3-0.6B}
 & \multicolumn{2}{c}{Qwen3-1.7B}
 & \multicolumn{2}{c}{Qwen3-4B} \\
\cmidrule(lr){2-3}\cmidrule(lr){4-5}\cmidrule(lr){6-7}
 & $L$ & $\Delta L\%$ & $L$ & $\Delta L\%$ & $L$ & $\Delta L\%$ \\
\midrule
GRPO (no replay)                & 2086 & n/a     & 1191 & n/a    & 3443 & n/a    \\
Ours ($|A_i|$ rollout) & 1752 & $-16.0$ & 1099 & $-7.7$ & 3156 & $-8.3$ \\
\bottomrule
\end{tabular}%
}
\caption{Training-time response length at step~200 for our method against the no-replay GRPO baseline across three Qwen3-Base scales.
         $L$ is the wandb-reported \texttt{response\_length/mean} and is the endpoint of Figure~\ref{fig:training-length-cross-scale}.
         $\Delta L\%$ is the change of our method relative to the per-scale GRPO baseline.}
\label{tab:training-length-cross-scale}
\end{table}

\section{FLOPs Accounting}
\label{app:flops-accounting}

A forward-backward update costs about $6N$ FLOPs per trained token and a forward-only pass about $2N$ per token, where $N$ is the non-embedding parameter count~\citep{kaplan2020scaling}.
Both conditions run the same model at a given scale, so $N$ cancels and every FLOPs ratio in Section~\ref{sec:dyn-compute} reduces to a ratio of token counts taken from the training logs.
Matching learner FLOPs rather than total FLOPs is the conservative direction, because our method also generates fewer tokens than the baseline at equal learner FLOPs, so matching total FLOPs would move the comparison further in its favor.

\begin{figure*}[!t]
  \centering
  \includegraphics[width=\textwidth]{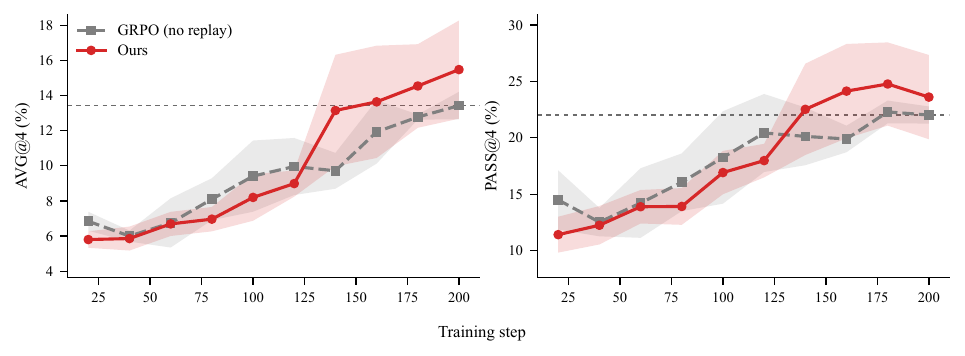}
  \caption{The points of Figure~\ref{fig:matched-flops} against training steps rather than learner FLOPs, at Qwen3-4B-Base.
           The dashed horizontal line marks the baseline's converged accuracy, which our method reaches at step $151$ on AVG@$K$ and step $138$ on PASS@$K$.}
  \label{fig:training-curve-steps}
\end{figure*}

Figure~\ref{fig:training-curve-steps} replots the same evaluations against training steps, which is the view a reader looking for a conventional learning curve expects.

\section{System Overhead}
\label{app:overhead}

\begin{table}[!t]
\centering\footnotesize
\setlength{\tabcolsep}{4pt}
\resizebox{\ifdim\width>\columnwidth \columnwidth\else \width\fi}{!}{%
\begin{tabular}{l cc cc cc}
\toprule
\multirow{2}{*}[-0.5ex]{Metric}
 & \multicolumn{2}{c}{Qwen3-0.6B}
 & \multicolumn{2}{c}{Qwen3-1.7B}
 & \multicolumn{2}{c}{Qwen3-4B} \\
\cmidrule(lr){2-3}\cmidrule(lr){4-5}\cmidrule(lr){6-7}
 & GRPO & Ours & GRPO & Ours & GRPO & Ours \\
\midrule
Step time (s)            & 191.0 & 165.8 & 106.6 & 114.7 & 193.9 & 241.4 \\
\quad update stage (s)   &  16.3 &  26.2 &  19.8 &  31.2 &  40.0 &  93.0 \\
Peak GPU memory (GB)     &  85.2 &  84.3 &  94.5 &  93.5 &  68.6 &  68.6 \\
\midrule
Buffer peak occupancy    & n/a & 7{,}840 & n/a & 9{,}568 & n/a & 12{,}344 \\
Buffer host RAM (GiB)    & n/a & 3.8 & n/a & 4.7 & n/a & 6.0 \\
\bottomrule
\end{tabular}%
}
\caption{Learner overhead of replay against the no-replay GRPO baseline, averaged over training steps $1$ to $200$ from the training logs.
         Hardware configurations differ across scales, so conditions should be compared within a scale rather than across scales.}
\label{tab:overhead}
\end{table}

Replay rollouts are appended as additional gradient-accumulation micro-steps rather than a larger per-device micro-batch, so Table~\ref{tab:overhead} shows peak GPU memory unchanged or slightly lower at every scale, with the buffer held in driver host memory at a fixed $0.5$~MiB per stored rollout.
The added cost falls on the update stage, and per-step wall time changes by $-13\%$, $+8\%$, and $+24\%$ at 0.6B, 1.7B, and 4B, since generation time falls as our method produces shorter responses.

\section{The 8B Result and the Start-of-Training Solve-Count Distribution}
\label{app:kprobe}

\begin{table*}[!t]
\centering\footnotesize
\setlength{\tabcolsep}{5pt}
\resizebox{\ifdim\width>\textwidth \textwidth\else \width\fi}{!}{%
\begin{tabular}{l cccc}
\toprule
 & Qwen3-0.6B & Qwen3-1.7B & Qwen3-4B & Qwen3-8B \\
\midrule
$k = 0$ & $74.24$ & $63.30$ & $49.54$ & $38.00$ \\
$k = 1$ & $12.14$ & $15.28$ & $18.96$ & $17.48$ \\
$k = 2$ & $5.74$ & $8.20$ & $12.70$ & $12.48$ \\
$k = 3$ & $3.30$ & $5.62$ & $8.98$ & $9.38$ \\
$k = 4$ & $2.24$ & $3.46$ & $6.10$ & $7.92$ \\
$k = 5$ & $1.54$ & $2.24$ & $2.64$ & $6.28$ \\
$k = 6$ & $0.68$ & $1.48$ & $0.90$ & $5.12$ \\
$k = 7$ & $0.10$ & $0.40$ & $0.18$ & $2.74$ \\
$k = 8$ & $0.02$ & $0.02$ & $0.00$ & $0.60$ \\
\midrule
$\mathbb{E}[k]$ & $0.55$ & $0.85$ & $1.16$ & $1.88$ \\
\midrule
Mixed, $1 \le k \le 7$ & $25.7$ & $36.7$ & $50.5$ & $61.4$ \\
\quad of which $k \le 3$ & $\mathbf{21.2}$ & $\mathbf{29.1}$ & $\mathbf{40.6}$ & $\mathbf{39.3}$ \\
\quad of which $k \ge 4$ & $\mathbf{4.6}$ & $\mathbf{7.6}$ & $\mathbf{9.8}$ & $\mathbf{22.1}$ \\
\bottomrule
\end{tabular}%
}
\caption{Solve-count distribution of the four base models before any training, as the percentage of $5{,}000$ training prompts whose $8$ rollouts contain $k$ correct answers, measured under the training sampling settings.
         Mixed groups are those the zero-variance filter keeps.
         Within them, $k \le 3$ are the groups whose minority side is correct, which is where the highest $|A_i|$ priorities sit, and $k \ge 4$ are the rest.
         Each bin has a standard error of at most $0.7$~pp.}
\label{tab:kprobe-initial-k}
\end{table*}

\begin{table*}[!t]
\centering\footnotesize
\setlength{\tabcolsep}{4pt}
\resizebox{\ifdim\width>\textwidth \textwidth\else \width\fi}{!}{%
\begin{tabular}{l ccccc cc}
\toprule
 & MATH-500 & AIME25 & AIME26 & HMMT-F25 & HMMT-F26 & avg & pass \\
\midrule
GRPO & $86.45$ & $17.50$ & $19.17$ & $12.08$ & $14.02$ & $29.84$ & $44.45$ \\
Ours & $86.62$ & $19.17$ & $19.17$ & $11.25$ & $13.64$ & $29.97$ & $43.83$ \\
\bottomrule
\end{tabular}%
}
\caption{Qwen3-8B-Base at global step~200 against its own no-replay GRPO baseline on the five-benchmark set.
         Per-benchmark cells and the avg column are AVG@$K$, and the pass column is PASS@$K$, both at $k = 8$ on every benchmark.
         The run uses a train batch of $128$ and a PPO mini-batch of $64$ rather than the $256$ and $128$ of the body results, and it uses a single training seed.
         Its pass column is therefore not comparable to Table~\ref{tab:cross-scale-main}, which uses $k = 16$ on the contest benchmarks.
         All other training and decoding settings match Appendix~\ref{app:training-setup}.}
\label{tab:8b-result}
\end{table*}

Table~\ref{tab:kprobe-initial-k} reports the solve-count distribution of the four base models before any training.
A group whose minority side holds $m$ of the $8$ rollouts gives each of those rollouts $|A_i| = \sqrt{(8-m)/m}$, so this distribution fixes the distribution of priorities the buffer draws from.
At 4B the correct side is the rare one in $99\%$ of the groups carrying the highest priority.
From 0.6B to 4B the mass leaving $k = 0$ lands at $k \le 3$, where that is the case, and the region grows from $21.2\%$ to $40.6\%$ of prompts.
From 4B to 8B it does not: the region is flat at $39.3\%$ while $k \ge 4$ more than doubles from $9.8\%$ to $22.1\%$.

Table~\ref{tab:8b-result} reports a Qwen3-8B-Base run against its own no-replay baseline, where the five-benchmark average differs by $+0.13$~pp against $+1.66$~pp at 4B.
That run uses a train batch of $128$ and a PPO mini-batch of $64$ rather than $256$ and $128$, is evaluated with $k = 8$ on all five benchmarks rather than $k = 16$ on the contest sets, and uses a single training seed.
We can only conjecture that the stronger 8B base model shifts the solve-count distribution away from the region $|A_i|$ targets and that this contributes to the absent margin.
One run under a different batch does not establish it.

\section{Importance-Ratio Clipping over Training}
\label{app:clipping-curves}

Figure~\ref{fig:clipping-curves} shows the per-step importance-ratio clipping fraction over the first $200$ training steps for the four age caps swept in Table~\ref{tab:age-clipping}.
The curves separate fresh-side and replay-side clipping.
The four age caps share a fresh-side profile that is essentially $\tau_{\max}$-invariant.
On the left panel, all four series fluctuate within roughly $0.06\%$.
Replay-side clipping grows monotonically with the age cap and accumulates over training.
On the right panel, $\tau_{\max}{=}60$ ends near $0.9\%$, roughly three times the $\tau_{\max}{=}1$ value at the same step.
This asymmetry is the dynamic counterpart to the policy-lag reading of Section~\ref{sec:exp-staleness}.
What grows with $\tau_{\max}$ is specifically the replay-side clipping rate, while the on-policy fresh side is unaffected.

\begin{figure*}[!t]
  \centering
  \includegraphics[width=\textwidth]{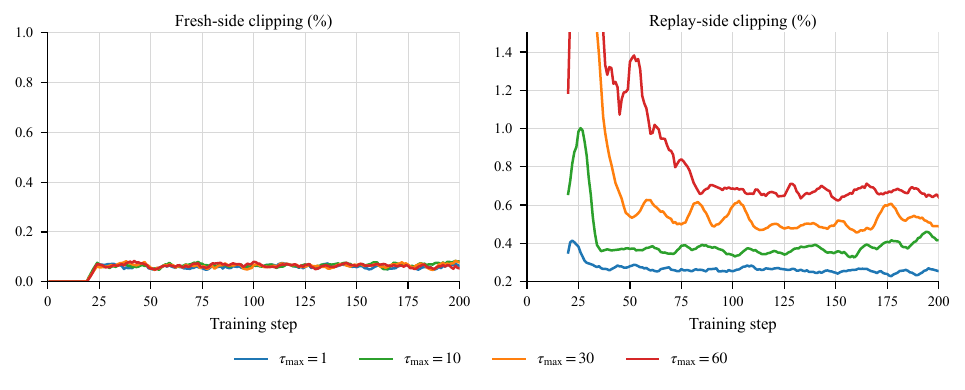}
  \caption{Importance-ratio clipping rates over training at Qwen3-1.7B-Base for the four age caps of Table~\ref{tab:age-clipping}.
           The curves are plotted as 5-step rolling means on the fresh side in the left panel and the replay side in the right panel.
           Replay-side clipping grows monotonically with the age cap while fresh-side clipping stays $\tau_{\max}$-invariant.
           This provides the dynamic counterpart to Table~\ref{tab:age-clipping}'s step-200 means.}
  \label{fig:clipping-curves}
\end{figure*}

\section{Full AES Ranking (12 Configurations)}
\label{app:aes-full}

Table~\ref{tab:aes-full} reports a 12-configuration 1.7B AES sweep against the no-replay GRPO baseline.
The sweep covers three axes all under fresh-anchored composition.
These are priority-strength $\alpha\in\{0, 0.5, 1.0\}$, max-age $\tau_{\max}\in\{1, 10, 30, 60\}$, and replay-ratio $r\in\{0.5, 1.0\}$.
The pool-composition naive baselines of Table~\ref{tab:aes-base} are reported there and are not duplicated here.
Our proposed $(|A_i|, \alpha{=}0.5, \tau_{\max}{=}10)$ leads the ranking at $+0.266$.
This is ahead of the next-best $(|A_i|, \alpha{=}0.5, \tau_{\max}{=}1)$ at $+0.248$ and every other tested configuration.
The tested configurations include alternative $(\alpha, \tau_{\max}, r)$ choices of the same $|A_i|$ priority and the uniform and $\sigma_g$ priority baselines.
The sweep uses a single training seed of $42$, so the 1.7B Ours row here at $+0.266$ is the seed-$42$ value of the corresponding cell in Table~\ref{tab:aes-base}, which averages three seeds.
Both use the same five-benchmark cache and aggregation reported in Appendix~\ref{app:aes-references}, but the reference here is the seed-$42$ GRPO run at $15.75$ accuracy and $2174$ tokens rather than the three-seed mean listed there.

\begin{table*}[!t]
\centering\footnotesize
\setlength{\tabcolsep}{4pt}
\resizebox{0.8\textwidth}{!}{%
\begin{tabular}{rl ccc cc}
\toprule
Rank & Method & Acc.\ (\%) & $L$ (tokens) & $\Delta L$ (\%) & $\Delta\mathrm{Acc}$ (\%) & AES \\
\midrule
 1 & Ours ($|A_i|$ rollout, default)        & 16.58 & 1938 & $+10.9$ & $+5.2$ & $\mathbf{+0.266}$ \\
 2 & $|A_i|$ rollout ($\tau_{\max}{=}1$)              & 16.49 & 1940 & $+10.8$ & $+4.7$ & $+0.248$ \\
 3 & $|A_i|$ rollout ($r{=}1.0$)                     & 15.99 & 1788 & $+17.8$ & $+1.5$ & $+0.224$ \\
 4 & $|A_i|$ rollout ($\tau_{\max}{=}30$)             & 16.01 & 1805 & $+17.0$ & $+1.6$ & $+0.219$ \\
 5 & uniform rollout (default)                       & 16.19 & 2121 & $+2.5$  & $+2.8$ & $+0.107$ \\
 6 & $|A_i|$ rollout ($\alpha{=}1.0$)                 & 15.96 & 2034 & $+6.5$  & $+1.3$ & $+0.105$ \\
 7 & uniform query (default)                         & 15.94 & 2065 & $+5.0$  & $+1.2$ & $+0.087$ \\
 8 & $\sigma_g$ rollout (default)                    & 16.04 & 2143 & $+1.4$  & $+1.8$ & $+0.069$ \\
 9 & $|A_i|$ rollout ($\tau_{\max}{=}60$)             & 15.28 & 1745 & $+19.7$ & $-3.0$ & $+0.047$ \\
10 & uniform rollout ($\tau_{\max}{=}30$)            & 15.18 & 1698 & $+21.9$ & $-3.6$ & $+0.038$ \\
11 & $\sigma_g$ query (default)                      & 15.43 & 2001 & $+8.0$  & $-2.0$ & $-0.022$ \\
12 & uniform rollout ($\tau_{\max}{=}1$)             & 16.45 & 2584 & $-18.8$ & $+4.4$ & $-0.056$ \\
\bottomrule
\end{tabular}%
}
\caption{Full token-based AES ranking of 12 1.7B configurations against the no-replay GRPO baseline at step~200 with Acc~$=15.75\%$ and $L=2174$~tokens.
         All rows are under fresh-anchored composition with AES formulation and 5-component aggregation following Table~\ref{tab:aes-base}.
         Default replay parameters are $(\alpha{=}0.5, \tau_{\max}{=}10, r{=}0.5)$ for $|A_i|$ and $\sigma_g$ rows and $(\alpha{=}0, \tau_{\max}{=}10, r{=}0.5)$ for uniform rows.
         Parenthetical qualifiers list only non-default values, and \textbf{bold} marks the best.}
\label{tab:aes-full}
\end{table*}

\section{Extended Related Work}
\label{app:extended-related-work}

Section~\ref{sec:related} positions our method against its closest neighbors along three lines.
The three lines are classical experience replay~\citep{dqn,per}, replay schemes for LLM RL~\citep{exgrpo,repo,rlep,buffer-matters,fatemi2026prioritized,arnal2026efficientrlreplay,freshper}, and priority signal design~\citep{cppo,dppo-prune,pods,dapo,vcrl,bae2026online,pcl}.
This appendix expands on the classical and priority strands, then summarizes where our method sits in the overall design space.

\subsection{Classical and Deep-RL Experience Replay}
\label{app:erw-classical-replay}

Experience replay originated with~\citet{lin1992self}, which stored past transitions and re-presented them to a connectionist Q-learner.
Deep replay developed with DQN~\citep{dqn} and was extended to continuous control by DDPG~\citep{ddpg}.
PER~\citep{per} introduced priority as an explicit component.
It samples proportional to $|\delta_i|^\alpha$ for TD error $\delta_i$ with importance-sampling corrections.
The $\alpha$ knob is the same one we ablate in Appendix~\ref{app:ratio-alpha-full}.
The max-priority-on-insertion convention of Section~\ref{sec:prelim} is inherited unchanged.
Rainbow~\citep{rainbow} identified PER as one of the higher-impact DQN components.
Distributional value learning~\citep{c51} is orthogonal in that it reshapes the target rather than the sampling distribution, much as GRPO already adopts a distributional view at the group level.

A second classical line addresses storage and eviction.
HER~\citep{her} relabels failed trajectories with achieved goals.
Energy-based hindsight prioritization~\citep{ebp} combines PER-style sampling with HER-style relabeling at the trajectory level, paralleling our move from transitions to rollouts.
LAP~\citep{lap} proves an equivalence between PER sampling and implicit loss reweighting and removes the need for importance-sampling correction.
The analogous correction is unnecessary here because $|A_i|$ in GRPO is itself the policy-gradient coefficient rather than a Bellman-target proxy.
ReF-ER~\citep{refer} drops gradient contributions whose ratio falls outside a trust region.
LFIW~\citep{lfiw} replaces importance-sampling ratios with a likelihood-free density-ratio estimator.
Both express the principle that a buffer implicitly defines a neighborhood of stale behavior policies.
Our age eviction with horizon $\tau_{\max}$ is a coarser combinatorial alternative that bounds the worst-case ratio under bounded per-step drift without estimating ratios or densities.
Selective replay~\citep{selective-replay} and topological replay~\citep{topological-replay} address what to store.
We instead place selection in the priority signal.

A third strand ties most directly to our composition choice.
\citet{cer} in ``Combined Experience Replay'' notes that buffer capacity is sensitive because small buffers correlate data and large ones make it stale.
That work always includes the most recent transition in every minibatch.
Fresh-anchored composition generalizes this from a single transition to a full GRPO batch.
CLEAR~\citep{clear} likewise mixes on-policy and replay updates each step with a behavioral-cloning regularizer.
We rely on the GRPO surrogate clip in its place.

Modern deep RL has also shown that the replay ratio is an important hyperparameter for sample efficiency.
REDQ~\citep{redq}, SR-SAC~\citep{srsac}, and the primacy-bias analysis of \citet{primacy-bias} stress the per-update axis.
\citet{fedus2020revisit} characterizes the joint effect of capacity and replay ratio in value-based deep RL.
LoRR~\citep{lorr} ports periodic-reset ideas to LLM RL.
In GRPO the per-rollout cost dominates the per-update cost by orders of magnitude, so the relevant ratio is gradient updates per generated rollout.
LLM policies drift more per step than small actor networks~\citep{areal,m2po}.
This is the failure mode the age cap is designed to prevent.
For stability, the deadly-triad analysis of \citet{deadly-triad} identifies replay as a load on the bootstrapping side.
GRPO does not bootstrap.
The off-policy axis is still present whenever stored rollouts differ from the current policy, which is the same concern $\tau_{\max}$ bounds.
Distributed replay systems such as Ape-X~\citep{apex}, R2D2~\citep{r2d2}, and Reverb~\citep{reverb} treat actor-learner lag as a scaling concern.
Their core abstractions of insertion-time priority, FIFO with optional eviction, and a configurable insert-to-sample ratio map onto our buffer directly.

\subsection{Priority Signals and Sample Selection in GRPO}
\label{app:erw-priority}

Priority signal design for GRPO inherits two open questions from the broader literature.
The first is which scalar identifies high-signal data.
The second is at what granularity that scalar is defined.
The first question appears in the TD-error of PER~\citep{per}, the loss equivalence of LAP~\citep{lap}, the density-ratio of LFIW~\citep{lfiw}, and the gradient-norm analysis of \citet{razin2024} that links low reward variance to vanishing gradients.
In GRPO, $|A_i|$ is the policy-gradient coefficient magnitude itself.
CPPO~\citep{cppo} and DPPO~\citep{dppo-prune} prune low-$|A|$ rollouts on the fresh side.
PODS~\citep{pods} keeps within-step reward extremes via $\sigma_g$-based downsampling.
We extend this $|A_i|$ lineage to the replay side by recycling individual rollouts with large absolute advantage after the single-pass update has discarded them.

The second question is granularity.
It runs from the prompt level~\citep{prompt-replay,pcl,vcrl,adarft,bae2026online,greso,no-prompt-left-behind,speedrl,dump,raftrej,reinforce-ada} to the query level used by most GRPO replay schemes including ExGRPO and~\citet{fatemi2026prioritized}.
Our method operates at the within-group rollout level.
Within-group difficulty signals such as $\sigma_g$~\citep{vcrl,fatemi2026prioritized} are reasonable query-level proxies but do not resolve the within-group minority rollouts where the gradient magnitude concentrates.
We argue this point in Section~\ref{sec:method-priority} and report results in Section~\ref{sec:exp-priority-signal}.
The prompt-level branch composes orthogonally with our buffer.
A prompt selector chooses what to roll out at all, while ours chooses what to reuse after generation.
CPPO, DPPO, and PODS are the fresh-side analogues that prune what enters the update, while we prioritize what re-enters it.

\subsection{Positioning Summary}
\label{app:erw-positioning}

Across these strands, our method occupies a specific point in a multidimensional design space.

\begin{itemize}[nosep,leftmargin=*]
  \item \textbf{Storage unit.} Rollout in ours, against
        transition in \citet{per}, trajectory in
        \citet{her}, and query or group in
        \citet{exgrpo,repo,rlep,buffer-matters}.
  \item \textbf{Priority signal.} $|A_i|$, the policy-gradient
        coefficient magnitude itself, against TD error in
        \citet{per}, correctness in
        \citet{raft,rlep,restem,raftrej}, correctness times
        entropy in \citet{exgrpo}, difficulty or $\sigma_g$
        in \citet{vcrl,fatemi2026prioritized}, and
        gradient-variance in \citet{gvmraft}.
  \item \textbf{Composition.} Fresh-anchored concat, against
        pooled in \citet{dqn,per}, mixed-batch in
        \citet{rlep,exgrpo,repo,buffer-matters,arnal2026efficientrlreplay,luffy,rlplus},
        and in-context replay in \citet{cer-language}.
  \item \textbf{Staleness control.} Age eviction
        $\tau_{\max}$, against capacity-only in
        \citet{dqn,per}, statistical clipping or
        importance-sampling bounds in
        \citet{topr,minimaxm1,m2po,bapo-clipping,fspo},
        system-level queue depth in
        \citet{areal,asyncflow,openrlhf,streamrl,llamarl,async-rlhf},
        soft recency bias in \citet{tba}, and reuse-count
        limit in \citet{ar3po}.
\end{itemize}

To our knowledge, no prior work occupies this combination of rollout-level $|A_i|$ priority with fresh-anchored composition under age eviction.
Per-axis ablations in Section~\ref{sec:exp-design} indicate that each choice contributes.
Removing the age cap increases replay-side clipping as shown in Table~\ref{tab:age-clipping}.
Removing fresh anchoring drops below GRPO at 4B as shown in Table~\ref{tab:pool-vs-concat-cross-scale}.
Reducing priority to query-level $\sigma_g$ removes the within-group variation of the signal and costs the most on PASS@$K$ as shown in Table~\ref{tab:priority-signal}.
The LLM-RL replay-buffer design space is tightly coupled, and the combination of choices matters more than any single axis.

\section{Use of AI Assistants}
\label{app:ai-assistance}

We used Claude Code from Anthropic as the primary AI assistant.
It supported routine code edits to training, evaluation, and analysis scripts.
It also provided grammar, phrasing, and editing assistance on the manuscript.
No AI tool was used to generate research ideas, experimental design, hypotheses, or scientific claims.
All results, analyses, and final wording were produced and verified by the authors.
This use is consistent with the ACL policy on AI writing assistance.

\end{document}